\pdfoutput=1
\documentclass[11pt]{article}
\usepackage{smile}
\usepackage{subcaption}
\usepackage{pdflscape}   
\usepackage{pifont}
\usepackage{xcolor}
\usepackage{tikz}
\usetikzlibrary{arrows.meta, positioning, calc, shapes.geometric}
\definecolor{morandiGreen}{RGB}{140,160,140}
\definecolor{morandiPink}{RGB}{244,186,180}  
\definecolor{morandiBlue}{RGB}{190,210,250}  
\definecolor{morandiOrange}{RGB}{200,160,120}
\definecolor{morandiPurple}{RGB}{150,140,160}
\definecolor{morandiGray}{RGB}{160,160,160}
\usepackage[margin=1in]{geometry}
\usepackage[most]{tcolorbox}

\newcommand{\wh}{\widehat}

\title{\bf Diffusion-Based Data-Driven Assortment Optimization}
\author{Junyi Liao$^\dagger$,\ \ 
Xiaohui Jiang$^\ddagger$,\ \ 
Zhengwei Tong$^\S$,\ \ 
Ethan X. Fang$^\ddagger$,\ \ 
Vahid Tarokh$^\dagger$
\\ 
~\\
$^{\dagger}$\small\it Department of Electrical and Computer Engineering, Duke University \\ 
$^{\ddagger}$\small\it Department of Biostatistics and Bioinformatics, Duke University \\ 
$^{\S}$\small\it Department of Computer Science, Duke University \\ \small\sf \{junyi.liao, x.jiang, zhengwei.tong, ethan.fang, vahid.tarokh\}@duke.edu
}
\date{}
\begin{document}
\maketitle

\begin{abstract}
Assortment optimization is a fundamental problem in revenue management, typically addressed using parametric choice models such as the multinomial logit (MNL) and its variants.  While these models enable tractable formulations, their performance is sensitive to model misspecification and often struggles to capture complex customer behavior. In this paper, we propose a model-agnostic framework for assortment optimization based on guided discrete diffusion. We represent assortments as binary vectors and perform stochastic search via a learned reverse diffusion process, avoiding explicit combinatorial enumeration. To incorporate decision objectives, we introduce a reward-guided mechanism that biases local transitions using estimates of expected revenue. This allows the method to effectively balance exploration and exploitation during generation.  Empirically, we show that the proposed approach consistently identifies high-quality assortments and remains robust under model misspecification, often recovering near-optimal solutions in high-dimensional settings. Moreover, the generative nature of diffusion enables the production of diverse high-performing assortments, offering flexibility beyond a single deterministic solution. These results highlight the potential of generative modeling as a scalable and robust paradigm for combinatorial optimization in data-driven decision-making.
\end{abstract}

%%%%%%%%%%%%-- main content --%%%%%%%%%%
\section{Introduction}
Assortment Optimization (AO) is a core problem in operations research, revenue management, and online platforms \citep{mcfadden1972conditional, rooderkerk2013optimizing, talluri2004revenue, rusmevichientong2010dynamic, strauss2018review}. In this setting, the decision-maker selects an assortment—a subset of products to offer to customers—to maximize expected revenue or utility based on customer choice behavior. This problem arises in a wide range of applications, including retail product display, online recommendation systems, advertisement placement, and service bundling \citep{fisher2014demand, liu2025simultaneous}. As digital platforms scale and product catalogs grow, the ability to efficiently optimize assortments has become increasingly critical for both economic performance and user experience.

Despite its practical importance, general assortment optimization, especially under flexible or data-driven choice models, is fundamentally challenging due to both computational and statistical complexities \citep{jagabathula2014assortment, strauss2018review}. At a high level, the difficulty stems from the combinatorial nature of the decision space: the assortment is a subset of items, and the number of feasible choices grows exponentially with the number of products. This challenge is further compounded by the structure of the objective function, which is highly non-linear due to substitution effects—adding or removing a single product can significantly alter the choice probabilities of all others \citep{train2009discrete, farias2013nonparametric, chen2022decision}. As a result, the optimization landscape is complex and often non-convex, rendering exhaustive search or naive optimization infeasible beyond small-scale instances \citep{rusmevichientong2010dynamic, desir2022capacitated}.

Existing approaches typically rely on parametric choice models, such as the multinomial logit (MNL) and its variants \citep{mcfadden1972conditional, manski1981structural, mcfadden2000mixed, kok2008assortment, feng2022consumer}, which enable tractable formulations and specialized algorithms. While effective under correct model specification, these methods suffer from two key limitations. First, they are sensitive to model misspecification, which can lead to suboptimal decisions in the presence of heterogeneous preferences or complex substitution patterns \citep{train2009discrete, farias2013nonparametric, bertsimas2015data, sturt2025value}. Second, even under parametric assumptions, scalable optimization remains challenging for richer models and in large-scale or constrained settings \citep{bront2009column, jagabathula2020conditional, desir2020constrained}. These difficulties are further exacerbated in the offline setting, where only historical interaction data is available. In this case, the decision-maker must both estimate the choice model and optimize the assortment under uncertainty, compounding statistical and computational challenges \citep{dong2025pasta, han2025learning}. Consequently, developing general-purpose methods that are both computationally efficient and robust to model misspecification in the offline setting remains an open challenge.

In parallel, diffusion-based generative models have recently emerged as a powerful paradigm for modeling complex, high-dimensional distributions \citep{ho2020denoising, song2020denoising, song2020score, chen2024overview}. Beyond their success in image and text generation, diffusion models have shown promise as stochastic optimizers, capable of exploring structured and combinatorial solution spaces \citep{sun2023difusco, zhao2024disco}. By iteratively refining noisy candidates, these models can efficiently navigate complex energy landscapes and produce diverse high-quality solutions. This generative perspective is particularly appealing for AO: instead of searching for a single optimum via expensive optimization routines, one can learn to sample near-optimal assortments directly. Such an approach naturally balances exploration and exploitation, and can return multiple high-quality solutions within a short computational budget.

In this paper, we propose \emph{Diffusion-based Data-Driven Assortment Optimization (D3AO)}, which is a generative framework for assortment optimization in the offline setting. Our approach treats assortments as discrete structured objects and learns to model high-reward regions of the solution space via a diffusion process. By incorporating reward-guided refinement into generation, D3AO efficiently produces diverse near-optimal assortments without relying on restrictive parametric assumptions, providing a flexible and scalable alternative to traditional optimization pipelines.

Our main contributions are summarized as follows:
\begin{itemize}[itemsep=-2pt, topsep=2pt]
\item\textit{A new generative formulation of AO.} We model assortment optimization as a sampling problem over combinatorial structures and introduce a diffusion-based framework tailored to discrete decision spaces.
\item\textit{Offline optimization with implicit behavior modeling.} Our method avoids explicit parametric choice model estimation, improving robustness under model misspecification.
\item\textit{Efficient generation of high-quality assortments.} The proposed approach produces multiple near-optimal solutions via stochastic generation, avoiding exhaustive combinatorial search.
\item\textit{Empirical validation in controlled settings.} Through experiments on synthetic benchmarks, we show that our method consistently identifies high-quality assortments and exhibits strong robustness under model misspecification.
\end{itemize}

\subsection{Related Works}
% \paragraph{Assortment Optimization.}
\paragraph{Assortment Optimization.}
Assortment optimization has been widely studied under different models of customer choice \citep{mcfadden1972conditional, talluri2004revenue, kok2008assortment, strauss2018review}. A major line of work focuses on parametric models, such as the multinomial logit and its variants, which enable tractable formulations and efficient algorithms \citep{manski1981structural, rusmevichientong2010dynamic, feng2022consumer}. However, these approaches can be sensitive to model misspecification, particularly in the presence of complex substitution patterns \citep{train2009discrete}. To improve flexibility, subsequent work considers nonparametric and data-driven models that relax structural assumptions on customer behavior \citep{farias2013nonparametric, bertsimas2015data}, though often at the cost of increased computational difficulty. A large body of literature further studies online assortment optimization, where the decision-maker sequentially learns customer preferences while optimizing assortments, typically using bandit-based approaches \citep{rusmevichientong2010dynamic, saure2013optimal, gong2022online, li2025online}. More recently, offline assortment optimization has been studied, where decisions must be made from historical data without active exploration \citep{wang2023neural, dong2025pasta, han2025learning}, further coupling statistical uncertainty with combinatorial optimization. In contrast to these approaches, we adopt a generative perspective and directly learn to produce high-quality assortments from data.

\paragraph{Diffusion-based Generative Models.}
Diffusion models \citep{ho2020denoising, song2020denoising, song2020score} provide a powerful framework for modeling complex distributions via a progressive denoising process. While initially developed for continuous domains, subsequent work extended diffusion to discrete state spaces \citep{austin2021structured, sun2022score}, enabling applications to categorical and combinatorial objects. Recent advances further introduce guidance mechanisms \citep{bansal2023universal, schiff2025simple} to steer generation toward desired outcomes. Building on these developments, diffusion-based methods have been applied to combinatorial optimization \citep{sun2023difusco, sanokowski2024diffusion, zhao2024disco, wang2025fractional, aloui2026score}, where they explore large discrete solution spaces through stochastic generation. Despite promising empirical results, their application to offline assortment optimization remains largely unexplored.

%Diffusion-based generative models \citep{ho2020denoising, song2020denoising, song2020score} model complex distributions through a progressive denoising procedure. While early methods were developed for continuous domains, later works extended diffusion to discrete state spaces \citep{austin2021structured, sun2022score}, making it applicable to categorical and combinatorial objects. Recent studies further introduced guidance mechanisms \citep{bansal2023universal, schiff2025simple} to steer generation toward desirable outputs. Motivated by these developments, diffusion-based approaches have also been proposed for combinatorial optimization \citep{sun2023difusco,sanokowski2024diffusion,zhao2024disco,wang2025fractional,aloui2026score}, where they are used to search over large and discrete solution spaces. While these methods have shown strong performance on generic combinatorial problems, their use in offline assortment optimization has been limited.

\paragraph{Offline Learning.}
Offline learning studies how to make decisions from historical interaction data without active exploration \citep{swaminathan2015counterfactual, prudencio2023survey}. A central challenge in this setting is that only partial feedback is observed under a logging policy, leading to selection bias, distribution shift, and limited coverage of the action space \citep{dudik2011doubly, levine2020offline}. Prior work addresses these issues through off-policy evaluation and counterfactual learning, aiming to evaluate or optimize decision rules from logged data \citep{dudik2011doubly, swaminathan2015counterfactual, joachims2018deep}. More recent advances in offline reinforcement learning further emphasize conservative or pessimistic approaches to mitigate extrapolation error \citep{fujimoto2019off, kumar2020conservative}. While these methods have primarily been studied in contextual bandits and reinforcement learning, our setting additionally involves combinatorial discrete actions and customer choice–dependent rewards, making offline assortment optimization more challenging \citep{shimizu2024effective}.

% Offline learning studies how to make decisions from historical interaction data without active exploration \citep{swaminathan2015counterfactual, prudencio2023survey}. A challenge in this setting is that the learner only observes partial feedback under the logging policy, which can lead to selection bias, distribution shift, and limited coverage of the action space \citep{dudik2011doubly,levine2020offline}. Many existing works address these challenges through off-policy evaluation and counterfactual learning, where the goal is to evaluate or optimize decision rules using logged feedback \citep{dudik2011doubly,swaminathan2015counterfactual}. While these methods have been studied mainly in contextual bandits and reinforcement learning, our setting additionally involves combinatorial discrete actions and customer choice--dependent rewards, making offline assortment optimization more challenging \citep{shimizu2024effective}.

\section{Offline Assortment Optimization}
\paragraph{Problem formulation.}
Assortment optimization is a fundamental problem in revenue management and online retail \citep{rooderkerk2013optimizing,kok2008assortment,qi2020data}. A seller is given a set of products and must decide which subset to display to a customer in order to maximize expected revenue. The challenge arises from the interaction between the offered assortment and the customer's choice behavior: offering more products increases variety but also induces substitution effects that may reduce overall revenue.

Formally, let $[N] = [N]$ denote the set of available products. An \emph{assortment} is defined as a nonempty subset of products, i.e., $s \subseteq [N]$ with $s \neq \emptyset$. Let $\mathscr{S} \subseteq 2^{[N]} \setminus \{\emptyset\}$ denote the feasible set of assortments (e.g., subject to cardinality or business constraints), where the unconstrained case corresponds to $\mathscr{S} = 2^{[N]} \setminus \{\emptyset\}$.

An assortment optimization problem is specified by a quadruple $([N], \mathscr{S}, p, r)$, where:
\begin{itemize}[itemsep=-2pt, topsep=2pt]
\item $[N]$ is the set of available products.
\item $\mathscr{S}\subseteq 2^{[N]}\setminus\{\emptyset\}$ is the feasible set of assortments;
\item $p:\mathscr{S} \to \Delta([N]\cup\{0\})$ is a \emph{choice policy}, where $p(\cdot\,|\, s)$ is a probability distribution supported on $s \cup \{0\}$. This distribution describes the choice behavior of a customer population when exposed to assortment $s$. Here we augment $[N]$ with a special element $0$ representing the \emph{no-purchase} option;
\item $r:\mathscr{S} \times ([N]\cup\{0\}) \to \mathbb{R}$ is the reward function, where $r(s,a)$ specifies the revenue obtained by the seller if item $a$ is chosen under assortment $s$. We assume $r(s,a)=0$ for $a \notin s$ and typically $r(s,0)=0$.
\end{itemize}

In a single episode of assortment deployment, the interaction proceeds as follows: the seller offers an assortment $s \in \mathscr{S}$ to a customer, and the customer selects an item
\begin{equation*}
A \sim p(\cdot \mid s), \qquad A \in s \cup \{0\},
\end{equation*}
where $A=0$ corresponds to the no-purchase outcome. The seller then receives reward $r(s,A)$. The expected revenue of an assortment $s$ is therefore given by
\begin{equation*}
R(s)
=
\mathbb{E}_{A \sim p(\cdot \mid s)}[r(s,A)]
=
\sum_{a \in [N] \cup \{0\}} r(s,a)\, p(a \mid s)
=
\sum_{a \in s} r(s,a)\, p(a \mid s).
\end{equation*}
The goal of assortment optimization is to identify an optimal feasible assortment $s^\star$ that maximizes the expected revenue:
\begin{equation*}
s^\star
\in
\argmax_{s \in \mathscr{S}} R(s).
\end{equation*}
This problem is challenging for two main reasons. First, the decision space is combinatorial, with $|\mathscr{S}|$ growing exponentially in $N$. Second, the reward function is defined through the interaction between $p$ and $r$, inducing complex, non-linear substitution effects across products. These challenges make exact optimization intractable beyond small-scale settings. In general, when the choice behavior is flexible or data-driven, the resulting optimization problem becomes computationally intractable (often NP-hard), and admits no efficient exact solution beyond small-scale instances \citep{bront2009column,jagabathula2014assortment}. These challenges motivate the need for scalable approximate optimization methods.

\paragraph{Offline learning setting.}
We consider an offline setting where only historical interaction data is available \citep{wang2023neural, dong2025pasta, han2025learning}. Specifically, we are given a dataset
\begin{equation*}
\mathcal{D} = \{(S_i, A_i)\}_{i=1}^n,
\end{equation*}
where each $S_i \in \mathscr{S}$ is a previously offered assortment and $A_i \in S_i \cup \{0\}$ is the observed customer choice. We assume that the reward function $r(s,a)$ is known, which is standard in the literature (e.g., prices or margins are typically available).

A key challenge in the offline setting is that the data is generated by a behavior policy $\mu$ over assortments, rather than being uniformly sampled from $\mathscr{S}$. As a result, the dataset typically provides only partial coverage of the combinatorial space, leading to a distribution shift between the observed assortments and those that may be optimal. This makes reliable evaluation and optimization of unseen assortments inherently difficult.

\subsection{A Maximum Entropy Perspective on Assortment Optimization}
In the offline setting described above, the observed assortments $\{S_i\}_{i=1}^n$ are generated by a behavioral policy $\mu$ over the feasible set $\mathscr{S}$. In practice, these assortments are not collected uniformly at random. Instead, they are typically produced by existing decision-making processes, such as heuristics, learned policies, or human operators. These decision-makers are often \emph{imperfect but reasonable}: they aim to favor high-reward assortments, but do not consistently identify the global optimum and may retain a degree of stochasticity or exploration \citep{ziebart2008maximum, levine2020offline}. As a result, the observed data tends to be biased toward higher-reward regions of the combinatorial space, while still maintaining some diversity.

To capture this behavior, we assume that the behavior policy follows a Boltzmann (softmax) distribution:
\begin{equation}
\mu(s)
\propto
\exp\bigl(\beta_0 R(s)\bigr),\quad s\in\mathscr{S}\label{eq:boltzmann}
\end{equation}
for some inverse temperature $\beta_0 > 0$. 

The parameter $\beta_0$ in \eqref{eq:boltzmann} controls the level of rationality of the behavior policy. When $\beta_0$ is small, the distribution becomes nearly uniform over $\mathscr{S}$, corresponding to a highly exploratory or random policy. As $\beta_0$ increases, the distribution becomes increasingly concentrated on high-reward assortments, approaching a deterministic optimal policy in the limit $\beta_0 \to \infty$. In particular, moderate values of $\beta_0$ yield a balanced regime in which the data exhibits both quality and diversity, avoiding degenerate scenarios where the data is generated either by a near-optimal oracle (yielding limited coverage) or by a random policy (yielding weak learning signals).

Importantly, the Boltzmann form in \eqref{eq:boltzmann} is not merely a modeling assumption. It admits a principled interpretation as the solution to an \textit{entropy-regularized revenue maximization problem}. In other words, such a policy arises as a \emph{soft-optimal} decision rule that balances revenue
maximization with entropy, providing a theoretical justification for its use in modeling historical data. We formalize this insight in the following theorem.

\begin{theorem}[Entropy-regularized characterization of Boltzmann policies]\label{thm:entreg}
Let $\mathscr{S}\subset\{0,1\}^N$ be a finite set of feasible assortments and $R : \mathscr{S} \to \mathbb{R}$
be a reward function. Consider the entropy-regularized revenue maximization problem
\begin{equation*}
\max_{q \in \Delta(\mathscr{S})}
\; \mathbb{E}_{s \sim q}[R(s)] + \frac{1}{\beta} \mathcal{H}(q),
\end{equation*}
where $\mathcal{H}(q) = -\sum_{s \in \mathscr{S}} q(s)\log q(s)$ denotes the \textit{Shannon entropy} \citep{shannon1948mathematical} of $q$, and $\beta > 0$ is a regularization parameter. Then the unique maximizer is given by the Boltzmann distribution
\begin{equation*}
q^\star(s) = \frac{\exp(\beta R(s))}{\sum_{s' \in \mathscr{S}} \exp(\beta R(s'))},\quad s\in\mathscr{S}.
\end{equation*}
\end{theorem}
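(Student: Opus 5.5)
The cleanest route is the Gibbs variational principle: rewrite the objective as a constant minus a Kullback--Leibler divergence to $q^\star$. Let $Z = \sum_{s'\in\mathscr{S}} \exp(\beta R(s'))$, which is finite and positive because $\mathscr{S}$ is finite and nonempty. Then $q^\star(s) = \exp(\beta R(s))/Z$ has full support on $\mathscr{S}$, and $\log q^\star(s) = \beta R(s) - \log Z$ for every $s$. Throughout I will use the convention $0\log 0 = 0$, so that $\mathcal{H}$ is well defined on the whole simplex $\Delta(\mathscr{S})$, including distributions with zeros.

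The key step is an algebraic identity. For any $q\in\Delta(\mathscr{S})$, substitute $R(s) = \tfrac{1}{\beta}\bigl(\log q^\star(s) + \log Z\bigr)$ into the objective $J(q) = \mathbb{E}_{s\sim q}[R(s)] + \tfrac1\beta \mathcal{H}(q)$. This gives
\begin{equation*}
J(q) = \frac{1}{\beta}\sum_{s\in\mathscr{S}} q(s)\bigl(\log q^\star(s) - \log q(s)\bigr) + \frac{1}{\beta}\log Z = \frac{1}{\beta}\log Z - \frac{1}{\beta}\,\mathrm{KL}(q\,\|\,q^\star).
\end{equation*}
Here I use $\sum_s q(s) = 1$, and terms with $q(s)=0$ vanish by the convention above. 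The KL divergence is finite for every $q$ because $q^\star$ has full support.

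To finish, I invoke Gibbs' inequality: $\mathrm{KL}(q\,\|\,q^\star)\ge 0$, with equality if and only if $q = q^\star$. I would state this directly from Jensen's inequality for the strictly concave $\log$ applied to $\sum_{s:q(s)>0} q(s)\log\bigl(q^\star(s)/q(s)\bigr)$. Since $\beta>0$, it follows that $J(q)\le \tfrac1\beta\log Z = J(q^\star)$, with equality exactly at $q=q^\star$. This establishes both that $q^\star$ is a maximizer and that it is the unique one.

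The only delicate point is the equality case of Jensen, together with the support issue. Equality in Jensen forces $q^\star(s)/q(s)$ to be constant on $\mathrm{supp}(q)$, and it also forces $\sum_{s\in \mathrm{supp}(q)} q^\star(s) = 1$. Because $q^\star(s) > 0$ for every $s\in\mathscr{S}$, the second condition gives $\mathrm{supp}(q) = \mathscr{S}$, and the constant must then be $1$. I expect this to be routine.

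An alternative route would use Lagrange multipliers together with strict concavity of $\mathcal{H}$ on the simplex. I prefer the KL identity because it handles the boundary of the simplex automatically.
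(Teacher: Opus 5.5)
Your proof is correct and complete, but it takes a different route from the paper. The paper forms a Lagrangian with only the normalization constraint and solves the first-order condition $R(s) - \frac{1}{\beta}(1+\log q(s)) + \lambda = 0$ to get $q(s) \propto e^{\beta R(s)}$. It then normalizes and invokes strict concavity of the objective on $\Delta(\mathscr{S})$ to conclude that this stationary point is the unique maximizer.

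The paper's approach is constructive: it derives the Boltzmann form rather than verifying a candidate supplied in advance. However, it handles the nonnegativity constraints only implicitly. The derivative $\log q(s)$ is undefined on the boundary of the simplex, so the argument tacitly relies on two facts: the stationary point it finds lies in the relative interior, and for a concave objective an interior stationary point is a global maximizer.

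Your identity $J(q) = \frac{1}{\beta}\log Z - \frac{1}{\beta}\mathrm{KL}(q\,\|\,q^\star)$ avoids these issues. It holds exactly on the whole simplex and needs no differentiation and no separate concavity argument for the objective. Boundary distributions are handled automatically through $0\log 0 = 0$ and the full support of $q^\star$. You also get, at no extra cost, the optimal value $\frac{1}{\beta}\log Z$ and the exact optimality gap of any $q$. Your treatment of the Jensen equality case, which forces $\mathrm{supp}(q) = \mathscr{S}$ and then a ratio of one, is right.

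The paper itself uses exactly your completion-of-KL argument to prove Theorem~\ref{thm:kl_guidance}. Theorem~\ref{thm:entreg} is the instance of that argument with the uniform reference measure on $\mathscr{S}$, since $\mathcal{H}(q) = \log|\mathscr{S}| - \mathrm{KL}(q\,\|\,\mathrm{Unif}(\mathscr{S}))$. Your route would therefore let the two results share a single proof.
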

\begin{proof}
See Appendix \S\ref{sec:entreg} for a detailed proof.
\end{proof}

Theorem~\ref{thm:entreg} shows that Boltzmann policies arise as solutions to entropy-regularized optimization, where the entropy term promotes diversity over the feasible set of assortments. This perspective will be central to our approach, as it suggests modeling assortment optimization as the problem of constructing a distribution that balances revenue maximization and exploration.

In the offline setting, our goal is not only to learn the underlying choice behavior from data, but more importantly to improve upon the historical policy by identifying higher-reward assortments. This naturally leads to a stochastic optimization viewpoint: rather than directly solving a combinatorial
maximization problem, we aim to construct a distribution over assortments that concentrates on high-reward regions while maintaining sufficient diversity. Diffusion-based generative models provide a principled and scalable mechanism for this purpose. By learning a data-driven prior over assortments and incorporating reward-based guidance, they enable efficient exploration of the combinatorial space while progressively biasing samples toward high-reward solutions.

\paragraph{Connection to maximum-entropy IRL.}
The Boltzmann behavior policy assumption is closely related to
prior work on maximum-entropy inverse reinforcement learning
\citep{ziebart2008maximum, wulfmeier2015maximum, snoswell2020revisiting}, where trajectories are distributed according to
\begin{equation*}
p(\tau) \propto q(\tau)\exp\bigl(R(\tau)\bigr),
\end{equation*}
with $q(\tau)$ denoting the marginal probability induced by the transition dynamics, and $R(\tau)$ the cumulative reward along $\tau$. In our setting, this corresponds to a degenerate one-step decision problem in which assortments play the role of actions and the base measure is implicit. Under this view, the historical data can be interpreted as generated by a soft-optimal policy over assortments. Our objective is therefore to construct
an improved distribution that shifts probability mass toward higher-reward assortments while retaining sufficient exploration.

\section{D3AO: Diffusion-based Data-Driven Assortment Optimization}
In this section, we propose \emph{Diffusion-based Data-Driven Assortment Optimization} (D3AO), a three-stage framework that integrates choice model estimation with guided generative optimization for offline assortment selection. 
An overview of the framework is illustrated in Figure~\ref{fig:d3ao-pipeline}. 
At a high level, D3AO consists of three components: (i) learning a neural choice model from offline data, (ii) constructing a model-based reward estimator, and (iii) performing guided stochastic optimization via a diffusion-based generative model. 
The diffusion model serves as a data-driven prior over feasible assortments, while the reward signal acts as a guiding force that progressively steers samples toward high-reward regions.

\begin{tcolorbox}[
    colback=gray!10,
    colframe=black,
    title=Step 1: Choice model estimation,
    coltitle=white,
    colbacktitle=black,
    fonttitle=\bfseries,
    boxrule=1pt, arc=2pt
]
Given offline data $\mathcal{D}=\{(S_i, A_i)\}_{i=1}^n$, we learn a neural choice model $p_\theta(a\,|\, s)$ that maps an assortment $s \in \mathscr{S}$ to a probability distribution over $[N] \cup \{0\}$. The model is constrained such that its output is supported on $s \cup \{0\}$, i.e., $p_\theta(a \mid s)=0$ for $a \notin s \cup \{0\}$.
We train $p_\theta$ by minimizing the \textit{cross-entropy loss} on observed choices:
\begin{equation*}
\hat{\theta}
\in
\argmin_{\theta}
-\frac{1}{n} \sum_{i=1}^n \log p_\theta(A_i \mid S_i),
\end{equation*}
which corresponds to Maximum Likelihood Estimation (MLE) over a flexible function class. In practice, we adopt neural choice models in \cite{wang2023neural}, where the model takes the assortment as input and outputs a masked probability distribution over feasible items.
\end{tcolorbox}

This formulation allows us to capture complex and potentially confounded customer behavior, including non-linear substitution patterns and interactions that are difficult to model with traditional parametric approaches such as multinomial logits.

\begin{figure}[t]
    \centering
    \includegraphics[width=\textwidth]{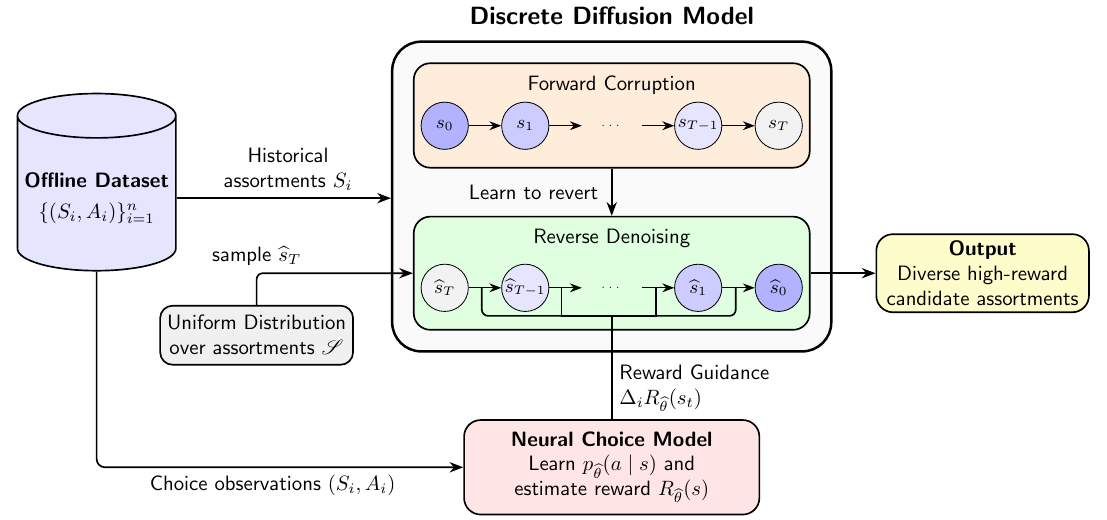}
    \caption{Our D3AO framework for offline assortment optimization. Starting from an offline dataset of historical assortments and customer choices, we first learn a neural choice model to estimate the expected reward of candidate assortments. In parallel, a discrete diffusion model is trained on historical assortments through a forward corruption process and a reverse denoising process. At inference time, the reverse process is initialized from a uniform distribution over assortments and is guided by the learned reward signal, producing diverse high-reward candidate assortments.}
    \label{fig:d3ao-pipeline}
\end{figure}

\begin{tcolorbox}[
    colback=gray!10,
    colframe=black,
    title=Step 2: Reward estimation,
    coltitle=white,
    colbacktitle=black,
    fonttitle=\bfseries,
    boxrule=1pt, arc=2pt
]
Using the fitted choice model, we construct a model-based (plug-in) estimator of the expected reward for any assortment $s$:
\begin{equation*}
R_{\wh\theta}(s)
=
\sum_{a \in s\,\cup\,\{0\}}
r(s,a)\, p_{\hat{\theta}}(a \mid s).
\end{equation*}
Essentially, this estimator approximates the true expected reward $R(s)$ by replacing the unknown choice policy $p$ with the learned model $p_{\hat{\theta}}$.
\end{tcolorbox}

The plug-in form enables efficient evaluation of arbitrary assortments without requiring additional data collection, which is crucial in the offline setting. Moreover, it allows us to generalize beyond the support of the observed data by leveraging the learned structure of the choice model. This provides a tractable surrogate objective for optimization over the combinatorial space.

Since the assortments $(S_i)$ in offline data are generated by a Boltzmann-like behavior policy $\mu(s) \propto \exp(\beta_0 R(s))$, which favors high-reward assortments while retaining stochasticity, we adopt a generative approach to approximate and improve this policy.
\begin{tcolorbox}[
    colback=gray!10,
    colframe=black,
    title=Step 3: Guided generative optimization from the behavior prior,
    coltitle=white,
    colbacktitle=black,
    fonttitle=\bfseries,
    boxrule=1pt, arc=2pt
]

\begin{itemize}
\item[\bf (I)] \textbf{Learning a generative behavior prior.}
We first train a discrete diffusion model \citep{austin2021structured} on the offline assortments $\{S_i\}_{i=1}^n$ to learn a generative approximation of the behavior policy. The forward corruption process gradually perturbs assortments toward a simple reference distribution:
\begin{equation*}
\mu = \mu_0 
\;\rightarrow\; \mu_1 
\;\rightarrow\; \cdots 
\;\rightarrow\; \mu_T \approx \text{Uniform}(\{0,1\}^N),
\end{equation*}
while the learned reverse process generates samples by approximately inverting this trajectory:
\begin{equation*}
\text{Uniform}(\{0,1\}^N) = \hat{\mu}_T 
\;\rightarrow\; \hat{\mu}_{T-1} 
\;\rightarrow\; \cdots 
\;\rightarrow\; \hat{\mu}_0 = \hat{\mu}.
\end{equation*}
This reverse process provides a generative approximation $\hat{\mu}$ of the behavior policy, capturing structural regularities of historically offered assortments.

\item[\bf (II)]\textbf{Reward-guided policy improvement.}
Sampling from $\hat{\mu}$ alone mainly reproduces the historical policy. To improve upon it, we guide the reverse denoising process using the learned choice model $p_{\hat\theta}$ in Step I, equivalently the estimated reward $R_{\hat{\theta}}(s)$. At each reverse step $\wh{\mu}_t\to\wh{\mu}_{t-1}$, the transition is modulated by a reward-dependent signal that biases the generation toward higher-value assortments.
\end{itemize}
Further details of the diffusion formulation and guidance mechanism are provided in \S\ref{sec:diffusion}.
\end{tcolorbox}

Intuitively, the unguided diffusion model defines a data-driven prior over plausible assortments, while the reward-based guidance acts as an external force that drives samples toward high-reward (low-energy) regions. The resulting procedure balances exploration and exploitation: the diffusion prior preserves diversity and prevents purely greedy search, whereas the guidance progressively concentrates samples in promising regions of the combinatorial space.

We emphasize that our goal is not to exactly sample from a prescribed Boltzmann distribution, but to perform guided policy improvement by combining a learned generative approximation of the behavior policy with reward-driven refinement. The method naturally produces a diverse set of high-quality assortments, which can be directly used in downstream applications. Additional selection can be applied if desired, but is not required.

\section{Guided Discrete Diffusion}\label{sec:diffusion}
In this section, we introduce a discrete diffusion framework for assortment optimization and describe how it can be guided by learned reward signals for policy improvement. Our approach adopts a generative modeling perspective: rather than solving the combinatorial optimization problem directly, we learn a structured distribution over assortments from offline data and then bias the sampling process toward high-reward regions.

The main challenge in assortment optimization lies in the exponentially large decision space. To address this, we model assortments as binary vectors and use a discrete diffusion process to generate candidate solutions through iterative denoising. This provides a flexible mechanism for exploring high-dimensional combinatorial spaces while retaining the structural information present in historical data. Throughout this section, we focus on the unconstrained setting $\mathscr{S}=\{0,1\}^N$, where each assortment is represented by a binary vector $s$. This formulation allows us to model the combinatorial space in a unified and tractable way.

While discrete diffusion models have been studied in prior work \citep{austin2021structured}, our setting differs in that the goal is not exact generative modeling, but reward-guided combinatorial optimization.  In particular, we develop a guided diffusion procedure that integrates learned choice models into the reverse process, allowing us to systematically bias generation toward high-reward regions while preserving diversity.

\subsection{Denoising Discrete Diffusion}
We first introduce the underlying denoising diffusion model over binary assortments, which serves as the generative backbone of our approach.
\paragraph{Forward process (corruption).}
We adopt a discrete-time Bernoulli corruption process that progressively randomizes an assortment vector. Starting from a clean assortment $s_0 \in \{0,1\}^N$, the forward process generates a sequence $\{s_t\}_{t=0}^T$ according to
\begin{equation}
q(s_{1:T}\mid s_0)
=
\prod_{t=1}^T q_t(s_t\mid s_{t-1}),
\label{eq:forward}
\end{equation}
where each transition acts independently across coordinates. For each $i\in[N]$,
\begin{equation*}
q_t(s_{t,i}\mid s_{t-1,i})
=
(1-\beta_t)\mathbf{1}\{s_{t,i}=s_{t-1,i}\}
+\frac{\beta_t}{2},
\qquad s_{t,i}\in\{0,1\},
\end{equation*}
with $\beta_t\in(0,1)$ a prescribed noise schedule. Equivalently, with probability $\beta_t$, the $i$-th bit is resampled uniformly from $\{0,1\}$; otherwise it remains unchanged. Hence the transition factorizes as
\begin{equation*}
q_t(s_t\mid s_{t-1})
=
\prod_{i=1}^N q_t(s_{t,i}\mid s_{t-1,i}),
\qquad s_t\in\mathscr{S}.
\end{equation*}
As $t$ increases, the forward process gradually destroys the combinatorial structure of the original assortment and approaches a near-uniform distribution over $\{0,1\}^N$.

\paragraph{Denoising parameterization.}
To reverse the corruption process, we learn a parametric reverse model that reconstructs structured assortments from noisy inputs. In principle, this defines a reverse generative process
\begin{equation*}
p_\phi(s_{0:T})
=
p(s_T)\prod_{t=1}^T p_\phi(s_{t-1}\mid s_t,t),
\end{equation*}
where $p(s_T)$ is chosen as the uniform distribution over $\{0,1\}^N$. Rather than parameterizing each reverse transition $p_\phi(s_{t-1}\mid s_t,t)$ directly, we follow Austin et al.~\citep{austin2021structured} and let the neural network predict the clean assortment $s_0$ from the noisy state $(s_t,t)$. 

Specifically, given a noisy assortment $s_t$ and timestep $t$, the network outputs
\begin{equation*}
g_\phi(s_t,t)\in\mathbb{R}^N,
\end{equation*}
which we interpret as logits of the clean assortment conditioned on $(s_t,t)$. These logits define Bernoulli probabilities
\begin{equation*}
\alpha_{t,i}=\sigma\!\bigl(g_\phi^{(i)}(s_t,t)\bigr),
\qquad i=1,\dots,N,
\end{equation*}
and hence a factorized distribution over the clean sample:
\begin{equation*}
\tilde p_\phi(\tilde s_0\mid s_t,t)
=
\prod_{i=1}^N
\alpha_{t,i}^{\tilde s_{0,i}}(1-\alpha_{t,i})^{1-\tilde s_{0,i}}.
\end{equation*}

The model is trained via denoising. Let $\mathcal{D}$ denote the empirical distribution of assortments in the offline dataset. We sample
\begin{equation*}
s_0\sim\mathcal{D},
\qquad
t\sim\mathrm{Unif}\{1,\cdots,T\},
\qquad
s_t\sim q_t(\cdot\mid s_0),
\end{equation*}
and train the network to predict the clean assortment $s_0$ from $(s_t,t)$ by minimizing the binary cross-entropy loss
\begin{equation*}
\mathcal{L}(\phi)
=
\mathbb{E}_{s_0\sim\mathcal{D}}
\mathbb{E}_{t\sim\mathrm{Unif}\{1,\cdots,T\}}
\mathbb{E}_{s_t\sim q_t(\cdot\mid s_0)}
\left[
-\sum_{i=1}^N
\big(
s_{0,i}\log\alpha_{t,i}
+
(1-s_{0,i})\log(1-\alpha_{t,i})
\big)
\right].
\end{equation*}
This objective trains $g_\phi(s_t,t)$ to approximate the clean-assortment logits from corrupted inputs.

\paragraph{Optimization-oriented reverse update.}
In a standard discrete diffusion model such as D3PM \citep{austin2021structured}, the clean-sample predictor is combined with the forward-process posterior to define a reverse transition kernel:
\begin{equation}
p_\phi(s_{t-1}\mid s_t,t)
\propto
\sum_{\tilde s_0\in\mathscr{S}}
q(s_{t-1},s_t\mid \tilde s_0)\,
\tilde p_\phi(\tilde s_0\mid s_t,t).
\label{eq:d3pmkernel}
\end{equation}
In our setting, however, the goal is not exact recovery of the reverse diffusion dynamics, but efficient generation of high-reward assortments. We therefore adopt a simpler update and directly use the predicted clean-sample distribution as the proposal for the next reverse step:
\begin{equation*}
p_\phi(s_{t-1}\mid s_t,t)
:=
\prod_{i=1}^N
\alpha_{t,i}^{s_{t-1,i}}(1-\alpha_{t,i})^{1-s_{t-1,i}}.
\end{equation*}
Equivalently,
\begin{equation*}
s_{t-1,i}\sim \mathrm{Bernoulli}(\alpha_{t,i}),
\qquad i=1,\dots,N.
\end{equation*}
Thus, the predicted clean-sample distribution itself is used as the reverse proposal. Compared with \eqref{eq:d3pmkernel}, this update is computationally cheaper and may be viewed as an accelerated denoising step, since it directly generates the next iterate without first constructing a faithful one-step reverse kernel. Empirically, this simplified update reduces computation and performs better in our reward-guided combinatorial optimization setting.

\subsection{Reward Guidance}
The denoising diffusion model described above learns a structured prior over assortments from offline data, but by itself it mainly reflects the historical behavior distribution and is not explicitly optimized for reward. To improve solution quality, we incorporate reward guidance into the reverse sampling process. The central idea is to preserve the structural prior induced by the learned denoiser while biasing generation toward assortments with higher estimated reward. This yields a guided sampling procedure that combines the exploration ability of diffusion with the optimization objective of assortment selection.
\paragraph{Guided reverse process.}
To improve upon the learned behavior prior, we incorporate reward-based guidance into the reverse diffusion dynamics. Conceptually, we bias the generative process toward high-reward assortments by modifying the reverse transition using an energy-based signal derived from the estimated reward.

Directly incorporating the global reward $R_{\hat{\theta}}(s_{t-1})$ into the reverse transition is intractable due to the high dimensionality of $s_{t-1}$. Instead, we adopt a coordinate-wise approximation based on local perturbations. For each coordinate $i \in [N]$, we consider two candidate assignments:
\begin{equation*}
s_t^{(i=1)} := (s_{t,1}, \dots, 1, \dots, s_{t,N}),
\qquad
s_t^{(i=0)} := (s_{t,1}, \dots, 0, \dots, s_{t,N}),
\end{equation*}
which differ only in the $i$-th entry. We define the marginal reward difference
\begin{equation*}
\Delta_i R_{\hat{\theta}}(s_t)
:=
R_{\hat{\theta}}(s_t^{(i=1)}) - R_{\hat{\theta}}(s_t^{(i=0)}),
\end{equation*}
which measures the incremental value of including item $i$ given the current context $s_{t,-i}$. We then modify the reverse logits by adding a reward-dependent bias:
\begin{equation}
\tilde{g}_\phi^{(i)}(s_t, t)
=
g_\phi^{(i)}(s_t, t)
+
\lambda_t \,\Delta_i R_{\hat{\theta}}(s_t),\label{eq:reward_guidance}
\end{equation}
where $\lambda_t \ge 0$ controls the strength of guidance. The resulting guided transition is
\begin{equation*}
p_{\mathrm{guide}}(s_{t-1,i}=1 \mid s_t, t)
=
\sigma\bigl(\tilde{g}_\phi^{(i)}(s_t, t)\bigr).
\end{equation*}
Equivalently, the guidance modifies the log-odds as
\begin{equation*}
\log \frac{
p_{\mathrm{guide}}(s_{t-1,i}=1 \mid s_t, t)
}{
p_{\mathrm{guide}}(s_{t-1,i}=0 \mid s_t, t)
}
=
\log \frac{
p_\phi(s_{t-1,i}=1 \mid s_t, t)
}{
p_\phi(s_{t-1,i}=0 \mid s_t, t)
}
+
\lambda_t \,\Delta_i R_{\hat{\theta}}(s_t).
\end{equation*}
This formulation admits an intuitive interpretation: the diffusion model provides a data-driven prior over assortments, while the reward difference $\Delta_i R_{\hat\theta}(s_t)$ acts as a local energy gradient that encourages the inclusion of items that increase reward.

\paragraph{Guidance schedule.}
The reliability of reward estimates depends on the noise level of $s_t$. When $t$ is large, $s_t$ is highly corrupted and reward differences are less informative; when $t$ is small, $s_t$ is closer to a valid assortment and reward signals are more reliable. To account for this, we introduce a time-dependent guidance strength
\begin{equation}
\lambda_t
=
\lambda_{\max}
\left(
1-\frac{t-1}{T-1}
\right)^\gamma,
\qquad t=1,\dots,T,\label{eq:scheduler}
\end{equation}
where $\lambda_{\max}>0$ is the maximum guidance strength and $\gamma \ge 1$ controls the concentration of guidance near the final denoising steps. In particular, $\lambda_T \approx 0$ suppresses guidance for highly noisy states, while $\lambda_1=\lambda_{\max}$ applies the strongest guidance near the end of the reverse process.

\paragraph{Sampling and properties.}
Starting from $s_T \sim \mathrm{Uniform}(\{0,1\}^N)$, we generate samples via the guided reverse process
\begin{equation*}
s_{t-1} \sim p_{\mathrm{guide}}(\cdot \mid s_t, t),
\qquad t = T, \dots, 1.
\end{equation*}
This procedure produces a set of candidate assortments that jointly reflect the structural prior learned from offline data and the reward-driven guidance. The proposed guided diffusion framework has several desirable properties:
\begin{itemize}[itemsep=-3pt, topsep=2pt]
\item In contrast to deterministic optimization methods that return a single solution, it generates a diverse set of high-quality assortments, providing flexibility in practical scenarios where robustness, uncertainty, or downstream constraints are important.
\item It avoids explicit combinatorial search over the $2^N$ subsets of $[N]$ by operating through local stochastic updates.
\item It enables efficient reward evaluation via coordinate-wise perturbations, requiring only $O(N)$ computations per step.
\item It naturally integrates reward-based guidance into the generative process, yielding a principled balance between exploration (from diffusion prior) and exploitation (from reward signal).
\end{itemize}

Overall, this framework transforms a challenging combinatorial optimization problem into a tractable guided sampling procedure, providing an effective and flexible mechanism for policy improvement in the offline setting.

\subsection{Theoretical Properties}
We provide several elementary theoretical properties of the proposed guided discrete diffusion procedure.
These results are not intended to establish global convergence to the optimal assortment.
Rather, they clarify the roles of the forward corruption process, the denoising objective, and the reward-guidance mechanism. First, we show that the Bernoulli corruption process approaches the uniform reference distribution at a controlled finite-time rate. Second, we characterize the population minimizer of the denoising objective as the posterior clean-sample marginal induced by the forward corruption process. Third, we show that the proposed reward-guided logit update is equivalent to a KL-regularized local policy improvement step under a coordinate-wise local reward surrogate.

We first justify the use of the uniform distribution as the terminal reference distribution for the forward process. 
The forward corruption kernel independently either preserves each bit or resamples it from a uniform Bernoulli distribution. 
Therefore, the amount of information retained from the initial assortment can be tracked explicitly through the cumulative retention factor
\[
\bar\alpha_t:=\prod_{\tau=1}^t(1-\beta_\tau).
\]
The following proposition gives a finite-time bound showing that the forward process approaches the uniform distribution as $\bar\alpha_t$ becomes small.
\begin{proposition}[Forward corruption approaches the uniform distribution]
\label{prop:forward_mixing}
Let $(s_t)_{t\geq 0}$ be the forward corruption process on $\{0,1\}^N$ defined by
\[
q_t(s_t\mid s_{t-1})
=
\prod_{i=1}^N
\left[
(1-\beta_t)\mathbf{1}\{s_{t,i}=s_{t-1,i}\}
+
\frac{\beta_t}{2}
\right],
\]
where $\beta_t\in(0,1)$. Define
\[
\bar\alpha_t
:=
\prod_{\tau=1}^t(1-\beta_\tau).
\]
Then, for any initial state $s_0\in\{0,1\}^N$ and any coordinate $i\in[N]$,
\[
\mathbb{P}(s_{t,i}=1\mid s_0)
=
\frac{1}{2}
+
\bar\alpha_t
\left(s_{0,i}-\frac{1}{2}\right).
\]
Equivalently, the conditional law of $s_t$ given $s_0$ factorizes as
\[
q_t(s_t\mid s_0)
=
\prod_{i=1}^N
\left[
\frac{1}{2}
+
\bar\alpha_t
\left(s_{0,i}-\frac{1}{2}\right)(2s_{t,i}-1)
\right].
\]
In particular, if $u$ denotes the uniform distribution on $\{0,1\}^N$, then
\[
\left\|
q_t(\cdot\mid s_0)-u
\right\|_{\mathrm{TV}}
\leq
\frac{N}{2}\bar\alpha_t.
\]
\end{proposition}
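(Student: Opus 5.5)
Since the forward kernel factorizes over coordinates and each coordinate evolves as an independent two-state Markov chain, I will first prove the one-coordinate formula and then assemble the product form and the total variation bound. Fix $i$ and write $m_t:=\mathbb{P}(s_{t,i}=1\mid s_0)-\tfrac12$. Conditioning on $s_{t-1,i}$, the transition $q_t$ keeps the bit with probability $1-\beta_t$ and otherwise resamples it uniformly. Hence
\[
\mathbb{P}(s_{t,i}=1\mid s_0)=(1-\beta_t)\,\mathbb{P}(s_{t-1,i}=1\mid s_0)+\tfrac{\beta_t}{2},
\]
that is, $m_t=(1-\beta_t)m_{t-1}$. The initial value is $m_0=s_{0,i}-\tfrac12$, so induction gives $m_t=\bar\alpha_t(s_{0,i}-\tfrac12)$, which is the claimed marginal formula.

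For the factorized form, I will use the fact that under \eqref{eq:forward} the coordinates of $s_t$ remain conditionally independent given $s_0$. This holds because each $q_\tau$ is a product kernel, so the joint law of the path is a product over $i$ of single-coordinate chains. The law of each coordinate is then Bernoulli with success probability $\tfrac12+m_t$, i.e.\ $\mathbb{P}(s_{t,i}=x\mid s_0)=\tfrac12+m_t(2x-1)$ for $x\in\{0,1\}$. Taking the product over $i$ gives the stated expression.

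For the total variation bound, I will compare two product measures, $P=\bigotimes_i P_i$ and $u=\bigotimes_i u_i$. The standard subadditivity $\|\bigotimes P_i-\bigotimes u_i\|_{\mathrm{TV}}\le\sum_i\|P_i-u_i\|_{\mathrm{TV}}$ follows by a telescoping hybrid argument, or alternatively from a coupling in which coordinates are coupled independently and a union bound is applied. For a single coordinate, $\|P_i-u_i\|_{\mathrm{TV}}=\tfrac12\bigl(|m_t|+|m_t|\bigr)=|m_t|=\tfrac12\bar\alpha_t$, using $|s_{0,i}-\tfrac12|=\tfrac12$. Summing over the $N$ coordinates yields $\tfrac N2\bar\alpha_t$.

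The only step requiring any care is the subadditivity of total variation for product measures. I will justify it through the hybrid telescoping: replace one factor at a time and use that $\|\mu\otimes\nu-\mu'\otimes\nu\|_{\mathrm{TV}}=\|\mu-\mu'\|_{\mathrm{TV}}$. The remaining steps are direct computations.
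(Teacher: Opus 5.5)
Your proposal is correct and follows essentially the same route as the paper. Both arguments iterate the one-coordinate recursion $m_t=(1-\beta_t)m_{t-1}$ to get $\bar\alpha_t(s_{0,i}-\tfrac12)$, use conditional independence across coordinates for the product form, and sum the per-coordinate distance $\bar\alpha_t/2$ via subadditivity of total variation over product measures; the only difference is that you justify that tensorization bound by a hybrid telescoping argument, which the paper simply invokes.
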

\begin{proof}
See Appendix \S\ref{sec:pf_fwmix} for a detailed proof.
\end{proof}
\begin{remark}
Proposition~\ref{prop:forward_mixing} is a finite-time statement. It shows that the terminal distribution is close to uniform whenever the cumulative retention factor $\bar\alpha_t=\prod_{\tau=1}^t(1-\beta_\tau)$ is small. In our implementation, we use a finite schedule with $T=100$ and $\beta_t$ linearly increasing from $10^{-4}$ to $0.1$, for which $\bar\alpha_T$ is small but nonzero. If one considers an infinite schedule satisfying $\sum_{t=1}^\infty \beta_t=\infty$, then $\bar\alpha_t\to 0$ since
\[
\log\bar\alpha_t
=
\sum_{\tau=1}^t\log(1-\beta_\tau)
\leq
-\sum_{\tau=1}^t\beta_\tau
\to -\infty,
\]
and hence the forward process converges to the uniform distribution in total variation.
\end{remark}
We next characterize what the denoising objective learns in the population limit.
The reverse model is trained to predict the clean assortment $s_0$ from a corrupted state $(s_t,t)$ using coordinate-wise binary cross-entropy.
The following result shows that, with infinite data and an unrestricted function class, the optimal predictor recovers the posterior marginal probability that each clean bit equals one.

\begin{proposition}[Population denoising learns posterior clean-sample marginals]
\label{prop:population_denoising}
Let $\mu_0$ be a distribution over clean assortments $s_0\in\{0,1\}^N$.
For each $t\in\{1,\cdots,T\}$, let $q_t(\cdot\mid s_0)$ denote the forward corruption kernel from $s_0$ to $s_t$.
Suppose the training data are generated by
\[
s_0\sim \mu_0,\qquad
t\sim \pi,\qquad
s_t\sim q_t(\cdot\mid s_0),
\]
where $\pi$ is a distribution over $\{1,\cdots,T\}$ with full support. Consider the population denoising objective over measurable logit functions
$g:\{0,1\}^N\times\{1,\cdots,T\}\to\mathbb{R}^N$:
\[
\mathcal{L}(g)
=
\mathbb{E}
\left[
-\sum_{i=1}^N
\left\{
s_{0,i}\log \sigma(g_i(s_t,t))
+
(1-s_{0,i})\log\left(1-\sigma(g_i(s_t,t))\right)
\right\}
\right].
\]
Then any population minimizer $g^\star$ satisfies, for every coordinate $i\in[N]$,
\[
\sigma(g_i^\star(s_t,t))
=
\mathbb{P}(s_{0,i}=1\mid s_t,t)
\]
almost surely with respect to the joint distribution of $(s_t,t)$. Equivalently, whenever
\[
0<
\mathbb{P}(s_{0,i}=1\mid s_t,t)
<1,
\]
the optimal logit is
\[
g_i^\star(s_t,t)
=
\log
\frac{
\mathbb{P}(s_{0,i}=1\mid s_t,t)
}{
\mathbb{P}(s_{0,i}=0\mid s_t,t)
}.
\]
\end{proposition}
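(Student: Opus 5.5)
The approach is to condition on the corrupted input $(s_t,t)$ and minimize pointwise. The state space $\{0,1\}^N\times\{1,\dots,T\}$ is finite, so conditioning raises no measurability issues. Write $\eta_i(s,t):=\mathbb{P}(s_{0,i}=1\mid s_t=s,t)$, which is well defined whenever $\mathbb{P}(s_t=s,t)>0$. By the tower property,
\[
\mathcal{L}(g)=\sum_{(s,t):\,\mathbb{P}(s_t=s,t)>0}\mathbb{P}(s_t=s,t)\sum_{i=1}^N \ell\bigl(\eta_i(s,t),\sigma(g_i(s,t))\bigr),
\]
where $\ell(\eta,p):=-\eta\log p-(1-\eta)\log(1-p)$. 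The key structural observation is that $\mathcal{L}$ decouples completely: the value $g_i(s,t)$ enters only the single term indexed by $(s,t,i)$. Since $g$ ranges over all functions (the class is unrestricted), minimizing $\mathcal{L}$ is equivalent to minimizing each term separately, as long as $\mathcal{L}$ is finite somewhere. One such point is $g\equiv 0$, which gives $\mathcal{L}=N\log 2$.

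Next, I solve the scalar problem $\min_{p\in(0,1)}\ell(\eta,p)$. Use the decomposition
\[
\ell(\eta,p)=H(\eta)+\mathrm{KL}\bigl(\mathrm{Ber}(\eta)\,\|\,\mathrm{Ber}(p)\bigr),
\]
where $H$ is the binary entropy. Gibbs' inequality then gives $\ell(\eta,p)\ge H(\eta)$, with equality iff $p=\eta$. Alternatively, strict convexity of $\ell$ in $p$ with derivative $-\eta/p+(1-\eta)/(1-p)$ gives the same conclusion.

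The endpoints need care because $\sigma(g)\in(0,1)$ is never $0$ or $1$. When $\eta\in\{0,1\}$ the infimum $0$ is not attained by any finite logit. I would handle this by working in extended logits $g_i\in[-\infty,\infty]$, with $\sigma(\pm\infty)=1$ or $0$, under which the minimizer is $\sigma(g_i)=\eta$. If one insists on real-valued $g$, a minimizer exists only if no atom has $\eta\in\{0,1\}$, and the claim holds vacuously otherwise. This is exactly why the statement phrases the logit formula only under $0<\eta<1$.

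Finally, I argue that any minimizer must satisfy $\sigma(g^\star_i(s,t))=\eta_i(s,t)$ on every atom of positive probability. Suppose instead that some term with positive weight is strictly above its infimum. Replacing $g^\star_i(s,t)$ by the pointwise optimizer then strictly lowers $\mathcal{L}$, which is a contradiction. Atoms of probability zero are unconstrained, which is the ``almost surely'' qualifier. Inverting the sigmoid via $\sigma^{-1}(p)=\log\frac{p}{1-p}$ yields the log-odds expression.

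The main obstacle is essentially bookkeeping: justifying the interchange from the joint minimization to pointwise minimization, and treating the boundary cases $\eta\in\{0,1\}$. The core calculus is routine.
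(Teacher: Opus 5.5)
Your proposal is correct and follows the same route as the paper: condition on $(s_t,t)$, reduce to the scalar risk $-\eta\log a-(1-\eta)\log(1-a)$, show it is minimized at $a=\eta$, and invert the sigmoid. Using the entropy-plus-KL (Gibbs) decomposition in place of the paper's first- and second-derivative test is cosmetic, and you also make explicit two steps the paper leaves implicit: that any minimizer must be pointwise optimal on positive-probability atoms, and the boundary case $\eta\in\{0,1\}$, which arises exactly when coordinate $i$ is almost surely constant under $\mu_0$, since the forward kernel has full support.
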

\begin{proof}
See Appendix \S\ref{sec:pf_population_denoising} for a detailed proof.
\end{proof}

The previous result explains what the unguided denoiser learns from historical assortments. We now turn to the reward-guided update, which uses this learned reverse proposal as a behavior prior and locally improves it using the reward signal. While the logit-shift guidance rule \eqref{eq:reward_guidance} may appear ad hoc, it admits a simple variational interpretation.
The following result shows that it is the closed-form solution of a KL-regularized local improvement problem, with the unguided reverse proposal serving as the reference distribution.
\begin{theorem}[Reward guidance as KL-regularized local policy improvement]
\label{thm:kl_guidance}
Fix a reverse diffusion step $t$ and a current state $s_t \in \{0,1\}^N$.
Suppose the unguided reverse proposal is a factorized Bernoulli distribution
\[
p_\phi(x \mid s_t,t)
=
\prod_{i=1}^N
\mathrm{Bernoulli}
\left(
x_i;\sigma(g_\phi^{(i)}(s_t,t))
\right),
\qquad x\in\{0,1\}^N .
\]
Let $R:\{0,1\}^N\to\mathbb{R}$ be a reward function. For each coordinate $i\in[N]$, define the local reward difference
\[
\Delta_i R(s_t)
:=
R(s_t^{(i=1)})-R(s_t^{(i=0)}),
\]
where $s_t^{(i=1)}$ and $s_t^{(i=0)}$ denote the vectors obtained from $s_t$ by setting the $i$-th coordinate to $1$ and $0$, respectively. Define the coordinate-wise local reward surrogate
\[
\ell_{s_t}(x)
:=
\sum_{i=1}^N x_i \Delta_i R(s_t).
\]
For any $\lambda_t>0$, consider the KL-regularized local improvement problem
\[
\max_{q\in\Delta(\{0,1\}^N)}
\left\{
\mathbb{E}_{x\sim q}\left[\ell_{s_t}(x)\right]
-
\frac{1}{\lambda_t}
\mathrm{KL}
\left(
q\,\|\,p_\phi(\cdot\mid s_t,t)
\right)
\right\}.
\]
Then the unique optimizer is
\[
q_t^\star(x\mid s_t)
=
\frac{
p_\phi(x\mid s_t,t)
\exp\left(\lambda_t \ell_{s_t}(x)\right)
}{
Z_t(s_t)
},
\]
where
\[
Z_t(s_t)
=
\sum_{x\in\{0,1\}^N}
p_\phi(x\mid s_t,t)
\exp\left(\lambda_t \ell_{s_t}(x)\right).
\]
Moreover, $q_t^\star(\cdot\mid s_t)$ factorizes across coordinates, and its coordinate-wise logits satisfy
\[
\log
\frac{
q_t^\star(x_i=1\mid s_t)
}{
q_t^\star(x_i=0\mid s_t)
}
=
g_\phi^{(i)}(s_t,t)
+
\lambda_t \Delta_i R(s_t).
\]
Equivalently,
\[
q_t^\star(x_i=1\mid s_t)
=
\sigma
\left(
g_\phi^{(i)}(s_t,t)
+
\lambda_t \Delta_i R(s_t)
\right).
\]
Thus, the reward-guided reverse transition is the solution of a KL-regularized local policy improvement problem, where the unguided reverse proposal acts as the reference distribution and $\ell_{s_t}$ acts as a coordinate-wise local approximation of the reward.
\end{theorem}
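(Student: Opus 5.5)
The plan is to follow the standard Gibbs variational principle, essentially the same argument used for Theorem~\ref{thm:entreg} but with the uniform base measure replaced by the reference $p_\phi(\cdot\mid s_t,t)$. Write $p:=p_\phi(\cdot\mid s_t,t)$ and $\tilde q(x):=p(x)\exp(\lambda_t\ell_{s_t}(x))/Z_t(s_t)$. Note that $p(x)>0$ for every $x\in\{0,1\}^N$, because each Bernoulli parameter $\sigma(g_\phi^{(i)})$ lies strictly in $(0,1)$. Hence $Z_t(s_t)\in(0,\infty)$ and $\tilde q$ is a well-defined, fully supported distribution.

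For any $q\in\Delta(\{0,1\}^N)$ with $q\ll p$ (here automatic), the key identity is
\[
\mathbb{E}_{q}[\ell_{s_t}]-\frac{1}{\lambda_t}\mathrm{KL}(q\,\|\,p)
=\frac{1}{\lambda_t}\log Z_t(s_t)-\frac{1}{\lambda_t}\mathrm{KL}(q\,\|\,\tilde q).
\]
It follows by expanding $\mathrm{KL}(q\|\tilde q)=\sum_x q(x)\bigl[\log q(x)-\log p(x)-\lambda_t\ell_{s_t}(x)+\log Z_t\bigr]$ and rearranging. Since $\mathrm{KL}(q\|\tilde q)\ge 0$, with equality if and only if $q=\tilde q$ (Gibbs' inequality, i.e.\ strict convexity of $-\log$), the objective is maximized exactly at $q=\tilde q$. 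This gives both existence and uniqueness of $q_t^\star=\tilde q$.

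For the factorization, I use that $\ell_{s_t}$ is linear in $x$: $\exp(\lambda_t\ell_{s_t}(x))=\prod_i\exp(\lambda_t x_i\Delta_iR(s_t))$. Multiplying by the product-form $p$, the unnormalized density becomes $\prod_i \pi_i^{x_i}(1-\pi_i)^{1-x_i}e^{\lambda_t x_i\Delta_i R}$ with $\pi_i=\sigma(g_\phi^{(i)})$. Consequently $Z_t=\prod_i\bigl[(1-\pi_i)+\pi_i e^{\lambda_t\Delta_iR}\bigr]$, and $q_t^\star$ is a product of Bernoullis with
\[
q_t^\star(x_i=1\mid s_t)=\frac{\pi_i e^{\lambda_t\Delta_iR}}{(1-\pi_i)+\pi_ie^{\lambda_t\Delta_iR}}.
\]
The log-odds are $\log\frac{\pi_i}{1-\pi_i}+\lambda_t\Delta_iR(s_t)=g_\phi^{(i)}(s_t,t)+\lambda_t\Delta_iR(s_t)$, using $\log(\sigma(g)/(1-\sigma(g)))=g$. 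Applying $\sigma$ gives the final equivalent form.

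There is no real obstacle here. The only points requiring care are the full-support remark, which guarantees that the KL is finite and that the rewriting holds for all $q$, and the observation that the marginals of a product measure are its factors. Both are immediate.
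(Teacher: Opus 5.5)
Your proposal is correct and follows essentially the same route as the paper: rewrite the objective as $\frac{1}{\lambda_t}\log Z_t(s_t)-\frac{1}{\lambda_t}\mathrm{KL}(q\,\|\,\tilde q)$, conclude existence and uniqueness from Gibbs' inequality, and then use the linearity of $\ell_{s_t}$ together with the product form of $p_\phi$ to read off the factorized Bernoulli logits $g_\phi^{(i)}(s_t,t)+\lambda_t\Delta_i R(s_t)$. Your full-support remark and your explicit product formula for $Z_t(s_t)$ are small additions that the paper leaves implicit.
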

\begin{proof}
See Appendix \S\ref{sec:pf_klguide} for a detailed proof.
\end{proof}
Theorem~\ref{thm:kl_guidance} provides a variational interpretation of the proposed guidance rule.
If the true reward $R$ were available, the guided transition would be the exact KL-regularized improvement of the unguided reverse proposal under the local surrogate $\ell_{s_t}^R$.
In the offline setting, D3AO implements a plug-in version of this update by replacing unknown $R$ with the model-based estimate $R_{\hat{\theta}}$.
Therefore, the practical transition
\[
\tilde g_\phi^{(i)}(s_t,t)
=
g_\phi^{(i)}(s_t,t)
+
\lambda_t\Delta_i R_{\hat{\theta}}(s_t)
\]
can be understood as an estimated local policy improvement step.

This interpretation is closely related to policy-based methods in reinforcement learning, especially KL-regularized policy improvement \citep{schulman2015trust, schulman2017proximal}. In these methods, policy updates favor actions with high reward or advantage while controlling deviation from a reference policy through a KL constraint or penalty.
Analogously, in our setting, the unguided reverse proposal serves as the reference policy, the coordinate-wise surrogate $\ell_{s_t}^R$ serves as a local reward or advantage signal, and the KL term keeps the guided transition close to the learned behavior prior.

\section{Numerical Experiments}
We evaluate the proposed diffusion-based assortment optimization framework on a range of synthetic settings designed to assess both solution quality and robustness. Our experiments are organized along three main dimensions. First, we examine the \emph{optimal ratio} achieved by different methods, comparing against parametric baselines under both correctly specified and misspecified choice models. Second, we study \emph{exact recovery} and \emph{diversity} of the generated assortments, highlighting the ability of our method to not only identify optimal solutions but also produce multiple high-quality and diverse candidates. Finally, we investigate \emph{sample efficiency} and robustness under \emph{distributional shift}, analyzing how performance varies with the size and quality of the offline dataset. Together, these experiments provide a comprehensive evaluation of the proposed approach in terms of accuracy, reliability, and practical applicability.
\subsection{Optimal Ratio}
In this subsection, we evaluate our diffusion-based approach on offline assortment optimization tasks, with a focus on both solution quality and robustness under model misspecification. In particular, we compare against classical parametric methods that follow an estimate-then-optimize pipeline, where the parameters of a choice model are first estimated from data, and the assortment is subsequently chosen by solving the resulting optimization problem under the fitted model using efficient, model-specific algorithms. In contrast, our method directly searches over assortments using guided sampling without relying on a correctly specified model. To provide a controlled comparison, we conduct experiments on synthetic datasets generated from multiple standard choice models and systematically vary problem size and data quality.

\paragraph{Setting.}
We consider the offline assortment optimization problem, where the goal is to select an assortment that maximizes the expected reward under an unknown customer choice model. In this setting, the choice model is not directly accessible; instead, we are given an offline dataset of historical interactions, consisting of assortments offered to customers and the corresponding observed choices and rewards. 

To evaluate different methods under controlled conditions, we generate synthetic datasets from three widely used choice models with increasing levels of complexity:
\begin{itemize}[itemsep=-2pt, topsep=2pt]
    \item \textbf{Multinomial Logit (MNL):} assumes homogeneous customer preferences and follows a standard softmax-based choice rule.
    \item \textbf{Markov Chain Choice Model (MCCM):} captures substitution effects through pairwise interactions among products.
    \item \textbf{Mixed Multinomial Logit (MMNL):} accounts for customer heterogeneity by introducing latent customer types.
\end{itemize}
Detailed formulations of these models are provided in Appendix~\S\ref{sec:choicemodels}.

We vary the number of products $N \in \{20,40,60,80,100\}$ to study scalability with respect to problem size. For each experimental setting, we generate offline datasets of size $n=10{,}000\ll 2^N$, which represents a medium-scale regime large enough to support reliable estimation while still reflecting the finite-data nature of offline AO. Historical assortments are generated according to a Boltzmann policy with inverse temperature parameter $\beta \in \{0.1, 1.0\}$, where smaller values of $\beta$ correspond to more exploratory (noisier) behavior. For each combination of data-generating mechanism, product size $N$, and inverse temperature $\beta$, we generate 10 independent datasets to ensure statistical reliability of the results.

\paragraph{Baseline Methods.}
We consider the following parametric baselines:
\begin{itemize}[itemsep=-2pt, topsep=2pt]
    \item \textbf{MNL-MLE}\; estimates item utilities under the multinomial logit (MNL) model via maximum likelihood estimation (MLE), and then solves the assortment optimization problem under the fitted model using efficient revenue-ordered (RO) policies~\citep{talluri2004revenue}.
    
    \item \textbf{MCCM-EM}\; estimates the parameters of the Markov chain choice model (MCCM) using the expectation-maximization (EM) algorithm~\citep{csimcsek2018expectation}, and then solves the assortment optimization problem using a polynomial-time linear programming formulation for Markov chain choice models~\citep{blanchet2016markov}.
\end{itemize}
In the MNL and MCCM settings, MNL-MLE and MCCM-EM are correctly specified and serve as \textit{oracle parametric baselines}, respectively. In contrast, under the MMNL setting, both methods are misspecified. All parametric baselines return a single assortment solution.

\paragraph{Guided sampling.}
Our approach adopts a model-agnostic optimization framework based on guided discrete diffusion. Starting from an initial noisy assortment, we iteratively apply a learned reverse process to generate candidate assortments. To incorporate reward information, we use guided reverse sampling, where the transition logits at each step are adjusted by a reward-based guidance term that biases the sampling process toward high-reward assortments.

We instantiate this framework using the neural choice models GAsN and RAsN proposed by \citet{wang2023neural}. For each instance, we generate 256 candidate assortments and evaluate their quality. Implementation details, including network architectures, diffusion schedules, and guidance parameters, are provided in Appendix~\S\ref{sec:implementation}.

\paragraph{Evaluation protocol.}
We evaluate all methods using the \emph{optimal ratio}, defined as
\[
\mathrm{Optimal\ Ratio\ of}\ s:=\frac{R(s)}{R(s^\star)},\quad s\in\mathscr{S},
\]
where $R(s)$ denotes the true expected reward of assortment $s$ under the ground-truth choice model, and $s^\star$ is the optimal assortment that maximizes $R(s)$.

For parametric baselines, we first estimate the model parameters from data and then solve the corresponding assortment optimization problem exactly under the fitted model. Since these methods return a single assortment, we report the optimal ratio of the returned solution.

For diffusion-based methods (GAsN- and RAsN-guided), each run generates a set of candidate assortments. We report summary statistics of their optimal ratios, including the maximum (best found solution), mean, median (\texttt{q50}), and 90th percentile (\texttt{q90}). These statistics characterize both the best-case performance and the overall quality of the generated samples. We also include an unguided sampling variant, which applies the reverse diffusion process without reward guidance, to isolate the effect of the guidance mechanism.

\begin{figure}[htbp]
\centering
\caption{
Optimal ratio across problem sizes $N$ under different choice models. 
Solid lines denote the mean optimal ratio, dashed lines denote the 90th percentile (\texttt{q90}), shaded regions indicate 95\% confidence intervals for the mean, and error bars indicate 95\% confidence intervals for \texttt{q90}.
}
\begin{subfigure}[ht]{\linewidth}
\centering
\includegraphics[width=0.75\linewidth]{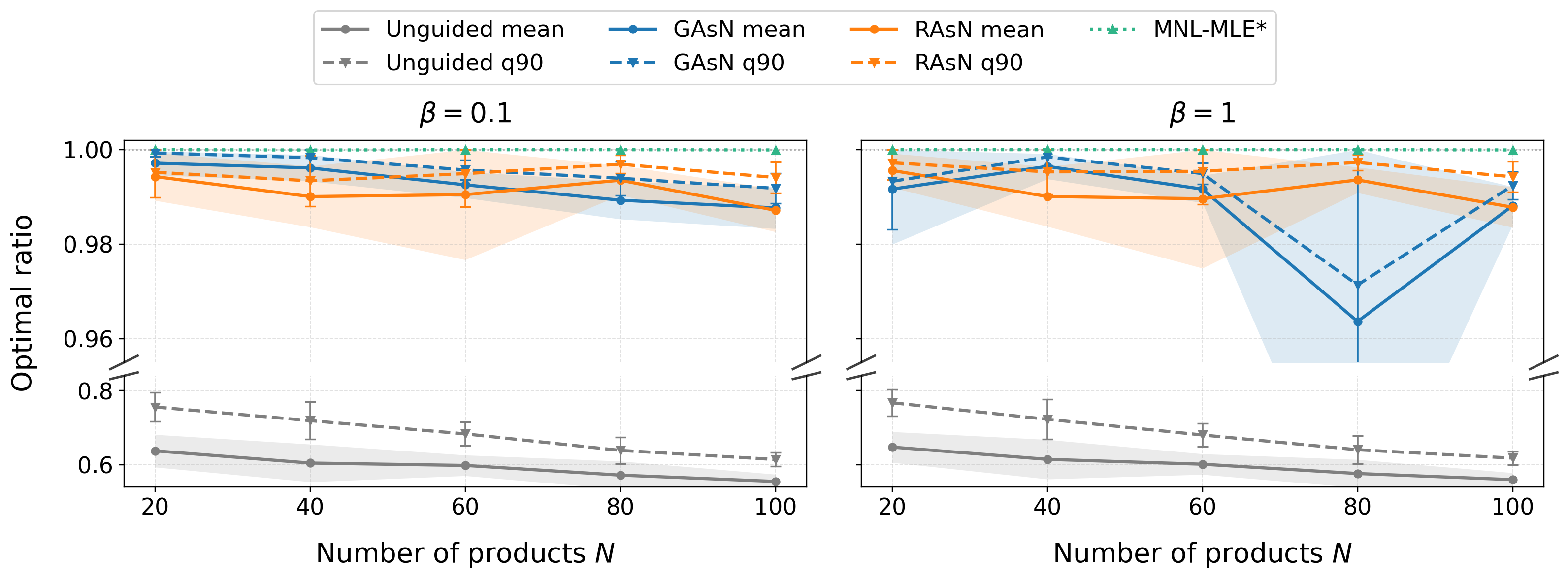}
\label{fig:mnl_results}
\caption{MNL data. The oracle baseline MNL-MLE$^{*}$ achieves near-optimal performance, while GAsN- and RAsN-guided methods remain highly competitive.}
\end{subfigure}

\vspace{0.1em}

\begin{subfigure}[ht]{\linewidth}
\centering
\includegraphics[width=0.75\linewidth]{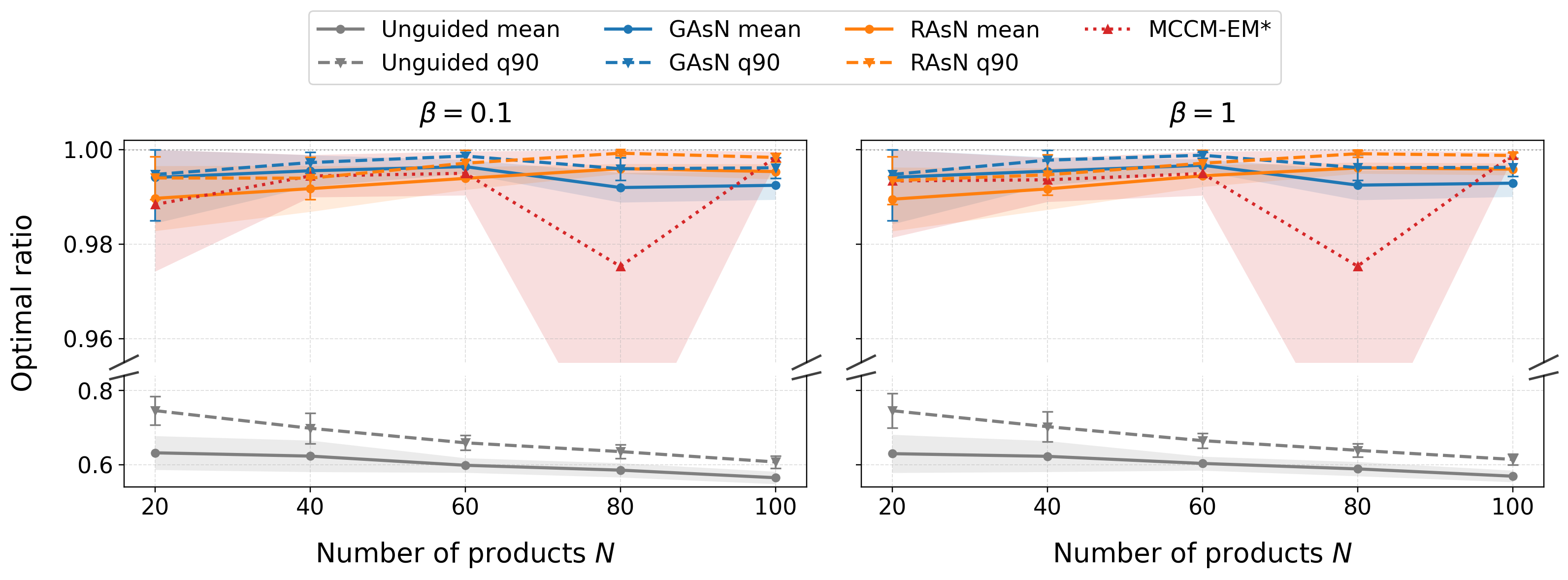}
\label{fig:mccm_results}
\caption{MCCM data. Both GAsN- and RAsN-guided methods match or outperform the oracle baseline MCCM-EM$^{*}$, while unguided sampling remains substantially suboptimal.}
\end{subfigure}

\vspace{0.1em}

\begin{subfigure}[ht]{\linewidth}
\centering
\includegraphics[width=0.75\linewidth]{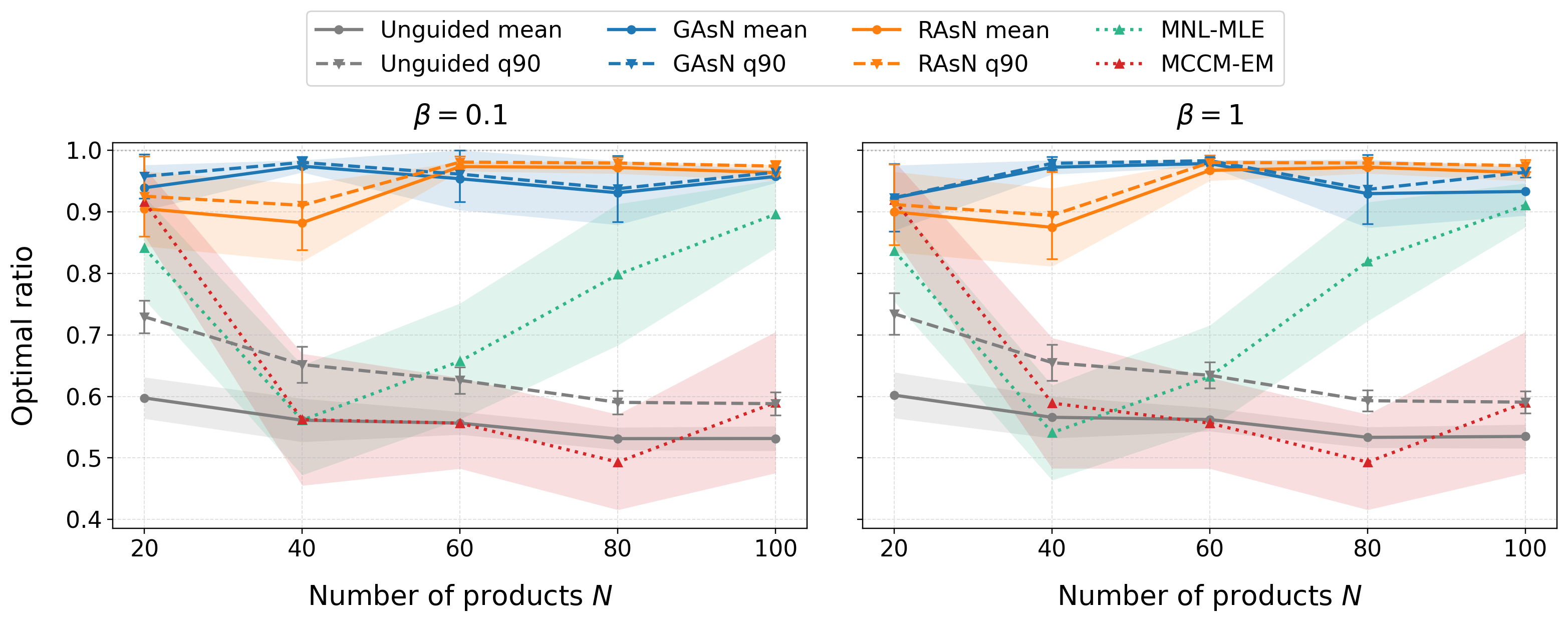}
\label{fig:mmnl_results}
\caption{MMNL data. Under model misspecification, guided diffusion methods significantly outperform parametric baselines and unguided sampling.}
\end{subfigure}
\label{fig:main_results}
\end{figure}

\paragraph{Results on MNL Data.}
Figure~\ref{fig:main_results} (a) and Table~\ref{tab:mnl_opt_ratio} report results on MNL data, where MNL-MLE serves as the correctly specified oracle baseline. Its near-perfect performance is expected, as the MNL model only requires estimating $N$ item utilities, with $N \ll n$. In our setting with $n = 10{,}000$ offline samples, these parameters can be estimated very accurately. In contrast, unguided sampling performs much worse, and both its maximum and average optimal ratios decrease as $N$ grows. Once reward guidance is introduced, both GAsN-guided and RAsN-guided improve sharply and remain near-optimal across almost all settings, with maximum optimal ratios consistently close to $100\%$. Their mean, median, and 90th percentile statistics are also uniformly high, indicating that the improvement is not limited to a few best samples. Overall, guided diffusion remains highly effective even against a correctly specified model-based baseline.

\paragraph{Results on MCCM Data.}
Figure~\ref{fig:main_results} (b) and Table~\ref{tab:mccm_opt_ratio} report results on synthetic data generated from the MCCM, where MCCM-EM is the correctly specified oracle parametric baseline. As in the MNL setting, MCCM-EM achieves a high optimal ratio across all problem sizes because it matches the true data-generating model, although its performance is slightly less perfect than in the MNL case due to the greater complexity of the MCCM. Unguided sampling again performs much worse, with both the maximum and average optimal ratios decreasing as $N$ increases. In contrast, both guided variants improve dramatically and remain close to optimal across nearly all settings. Notably, in several cases, the best found solutions slightly exceed the single solution returned by MCCM-EM. Overall, these results show that guided diffusion remains highly effective in the more structured MCCM setting.

\paragraph{Results on MMNL Data under Model Misspecification.}
Figure \ref{fig:main_results} (c) and Table~\ref{tab:mmnl_opt_ratio} report results on MMNL data, where both MNL-MLE and MCCM-EM are misspecified parametric baselines. In this setting, the model-based baselines degrade substantially and become much less stable across problem sizes, while unguided sampling also performs poorly. In contrast, both guided variants remain consistently strong and substantially outperform the misspecified baselines across all settings. This is consistent with the fact that our method does not rely on a fixed parametric assumption on the underlying choice model, and is therefore less sensitive to model mismatch. Overall, these results show that guided diffusion is substantially more robust to model misspecification and can recover high-quality assortments without requiring explicit assumptions on the data-generating model.

\subsection{Optimality and Diversity}

While the optimal-ratio results show that guided diffusion consistently generates high-reward assortments, they do not reveal whether these samples exactly recover the global optimum, nor how the sampler behaves when exact recovery becomes difficult. In relatively simple settings, a strong method should place substantial probability mass on the true optimal assortment and recover it frequently. As the problem becomes more challenging, however, exact recovery may no longer be realistic, even when the generated assortments remain near-optimal in reward. In that regime, it becomes important to ask a complementary question: whether the generative procedure still produces a sufficiently diverse set of competitive candidate solutions, rather than collapsing to a small number of repeated outputs. Motivated by this, we next evaluate guided sampling from two perspectives beyond reward quality: exact recovery of the optimal assortment and diversity among generated assortments.

\paragraph{Exact Recovery of Optimal Assortments.}
While the optimal-ratio results quantify the reward quality of generated assortments, they do not reveal whether guided sampling actually recovers the true global optimum. To address this, we evaluate the \emph{exact recovery rate} among generated samples. For each problem instance, suppose a method generates $M=256$ assortments ${s^{(1)}, \dots, s^{(M)}}$, and let $s^\star$ denote an optimal assortment under the ground-truth model. We define
\begin{equation*}
\mathrm{Rec}(S)=\frac{1}{M}\sum_{i=1}^M \mathbf{1}\{s^{(i)}=s^\star\},
\end{equation*}
that is, the proportion of generated samples that exactly match the optimal solution. Unlike reward-based metrics, exact recovery provides a stricter criterion that directly measures whether the generative procedure places sufficient probability mass on the true optimum.

\paragraph{Results on Exact Recovery.}
Table~\ref{tab:exact_prop_combined} reports the exact recovery rates under different data-generating models and problem sizes. In both the MNL and MCCM settings, the recovery rate decreases markedly as $N$ increases, indicating that exact identification of the global optimum becomes increasingly difficult as the combinatorial search space grows. Nevertheless, both guided variants retain nontrivial recovery probabilities in low- and medium-dimensional regimes. For example, on MCCM data with $\beta=1.0$, GAsN-guided achieves exact recovery rates of 68.16\% at $N=20$ and 32.73\% at $N=40$. In more challenging settings, especially under MMNL model misspecification, exact recovery becomes rare or disappears altogether. Combined with the high optimal-ratio results in Tables~\ref{tab:mnl_opt_ratio} and~\ref{tab:mccm_opt_ratio}, these findings suggest that even when exact recovery is difficult, guided diffusion continues to generate high-quality near-optimal assortments reliably.

\begin{table}[htbp]
\centering
\newcommand{\std}[1]{\ensuremath{\,{\scriptstyle \pm\, #1}}}
\caption{Proportion of exact optimal assortments. All entries are reported in percentage units (\%) and shown as mean $\pm$ standard deviation over 10 runs.}
\label{tab:exact_prop_combined}

\begin{subtable}[t]{\textwidth}
\centering
\caption{MNL data}
\begin{tabular}{clccccc}
\toprule
$\beta$ & Method & $N=20$ & $N=40$ & $N=60$ & $N=80$ & $N=100$ \\
\midrule
\multirow{2}{*}{1.0}
& GAsN-guided
& \textbf{45.20}\std{44.34}
& \textbf{30.27}\std{43.19}
& \textbf{12.42}\std{26.33}
& 0.70\std{1.41}
& 0.43\std{1.29} \\
& RAsN-guided
& 30.23\std{34.04}
& 31.02\std{35.48}
& 8.44\std{13.29}
& \textbf{7.46}\std{16.14}
& \textbf{0.55}\std{1.40} \\
\midrule
\multirow{2}{*}{0.1}
& GAsN-guided
& \textbf{44.38}\std{44.13}
& \textbf{31.17}\std{39.82}
& \textbf{13.09}\std{27.40}
& 1.25\std{3.26}
& 0.20\std{0.59} \\
& RAsN-guided
& 34.45\std{36.26}
& 28.63\std{39.36}
& 13.05\std{20.52}
& \textbf{7.73}\std{15.96}
& \textbf{0.98}\std{2.18} \\
\bottomrule
\end{tabular}
\end{subtable}

\vspace{0.8em}
\begin{subtable}[t]{\textwidth}
\centering
\caption{MCCM data}
\begin{tabular}{clccccc}
\toprule
$\beta$ & Method & $N=20$ & $N=40$ & $N=60$ & $N=80$ & $N=100$ \\
\midrule
\multirow{2}{*}{1.0}
& GAsN-guided 
& \textbf{68.16}\std{44.35} 
& \textbf{32.73}\std{33.71} 
& \textbf{5.98}\std{8.93} 
& 2.97\std{7.91} 
& 0.98\std{1.77} \\
& RAsN-guided 
& 36.45\std{44.19} 
& 26.21\std{37.13} 
& 2.73\std{3.67} 
& \textbf{7.15}\std{10.42} 
& \textbf{7.89}\std{17.11} \\
\midrule
\multirow{2}{*}{0.1}
& GAsN-guided 
& \textbf{67.77}\std{44.59} 
& \textbf{35.82}\std{38.43} 
& \textbf{6.41}\std{13.77} 
& 2.77\std{7.01} 
& 1.13\std{1.76} \\
& RAsN-guided 
& 36.76\std{44.77} 
& 27.23\std{36.83} 
& 2.97\std{4.76} 
& \textbf{6.64}\std{13.34} 
& \textbf{6.60}\std{15.46} \\
\bottomrule
\end{tabular}
\end{subtable}

\vspace{0.8em}
\begin{subtable}[t]{\textwidth}
\centering
\caption{MMNL data}
\begin{tabular}{clccccc}
\toprule
$\beta$ & Method & $N=20$ & $N=40$ & $N=60$ & $N=80$ & $N=100$ \\
\midrule
\multirow{2}{*}{1.0}
& GAsN-guided 
& 13.48\std{30.65} 
& \ 0.00\std{0.00}\ 
& \ 0.00\std{0.00}\ 
& \ 0.00\std{0.00}\ 
& \ 0.00\std{0.00}\ \\
& RAsN-guided 
& \textbf{20.16}\std{37.58} 
& \ 0.00\std{0.00}\  
& \ \textbf{3.24}\std{9.73}\  
& \ 0.00\std{0.00}\ 
& \ \textbf{1.48}\std{4.45}\ \\
\midrule
\multirow{2}{*}{0.1}
& GAsN-guided 
& 6.29\std{14.52} 
& \ \textbf{0.08}\std{0.23}\  
& \ 0.00\std{0.00}\ 
& \ 0.00\std{0.00}\ 
& \ 0.00\std{0.00}\ \\
& RAsN-guided 
& \textbf{20.12}\std{36.98} 
& \ 0.00\std{0.00}\ 
& \ \textbf{3.13}\std{9.38}\  
& \ 0.00\std{0.00}\ 
& \ \textbf{0.63}\std{1.88}\ \\
\bottomrule
\end{tabular}
\end{subtable}
\end{table}

\paragraph{Diversity of Generated Assortments.}
As shown above, exact recovery becomes increasingly difficult as the problem size grows, even when guided diffusion continues to generate near-optimal assortments. In such regimes, it is important to further understand whether the sampler still explores a nontrivial set of competitive solutions, rather than collapsing to a small number of repeated outputs. To quantify this behavior, we evaluate the diversity of generated assortments using two complementary metrics: the \emph{unique sample ratio} and the \emph{average pairwise Hamming distance}. Suppose each method generates $M$ assortments ${s^{(1)}, \dots, s^{(M)}}$, where each $s^{(i)} \in {0,1}^N$. Let $U$ denote the number of distinct assortments in this set. We define the \emph{unique sample ratio} as
\begin{equation*}
\mathrm{Uniq}(S)=\frac{U}{M},
\end{equation*}
which measures the proportion of distinct solutions among all generated samples. A higher value indicates that the generative procedure avoids collapsing to only a few repeated assortments.

To further measure structural variation among generated solutions, we consider the \emph{average pairwise Hamming distance}:
\begin{equation*}
\mathrm{Ham}(S)=\frac{2}{M(M-1)}\sum_{1\le i<j\le M}\frac{1}{N}\bigl\|s^{(i)}-s^{(j)}\bigr\|_1.
\end{equation*}
This metric measures how different two generated assortments are in terms of product inclusion, normalized by the number of products. While the unique ratio reflects \emph{support diversity}, namely how many distinct solutions are produced, the Hamming distance reflects \emph{geometric diversity}, namely how far apart these solutions are in the combinatorial space. Together, these two metrics provide a more complete picture of whether guided sampling maintains meaningful variation among high-quality candidate assortments.

\paragraph{Results on Diversity.}
Table~\ref{tab:diversity_unique_ratio} reports the diversity of generated assortments across different data-generating models and problem sizes. Overall, both the unique sample ratio and the average pairwise Hamming distance tend to increase as $N$ grows, indicating that guided sampling becomes more diverse in larger combinatorial spaces. By contrast, when $N$ is small, the generated assortments are much less diverse. This is consistent with the exact-recovery results: in simpler regimes, the optimal assortment is easier to identify, so the sampling process tends to concentrate more strongly around the optimum. As the problem becomes harder, exact recovery becomes less likely, and the sampler correspondingly spreads its probability mass over a broader set of competitive candidate solutions rather than repeatedly producing the same assortment.

\begin{table}[htbp]
\centering
\newcommand{\std}[1]{\ensuremath{\!{\scriptstyle \pm\, #1}}}
\caption{Diversity of generated assortments measured by unique sample ratio and average pairwise Hamming distance. All entries are reported in percentage units (\%) and shown as mean $\pm$ standard deviation over 10 runs, based on 256 generated assortments per run.}
\label{tab:diversity_unique_ratio}
\begin{subtable}[t]{\textwidth}
\centering
\caption{Unique ratio}
\resizebox{0.9\textwidth}{!}{%
\begin{tabular}{cllccccc}
\toprule
$\beta$ & Data model & Method & $N=20$ & $N=40$ & $N=60$ & $N=80$ & $N=100$ \\
\midrule
\multirow{6}{*}{1.0} & \multirow{2}{*}{MNL}
& GAsN
& \textbf{1.21} \std{1.07}
& \textbf{5.31} \std{4.67}
& \textbf{14.88} \std{11.78}
& 25.00 \std{26.20}
& 28.63 \std{13.82} \\
&& RAsN
& \textbf{1.21} \std{0.56}
& 3.71 \std{3.97}
& 14.38 \std{12.72}
& \textbf{27.19} \std{23.93}
& \textbf{50.74} \std{28.23} \\
\cmidrule(lr){2-8}
&\multirow{2}{*}{MCCM}
& GAsN
& 0.78 \std{0.55}
& \textbf{2.81} \std{1.59}
& 9.73 \std{4.02}
& \textbf{17.73} \std{10.18}
& 25.74 \std{14.73} \\
&& RAsN
& \textbf{1.09} \std{0.67}
& 2.15 \std{1.12}
& \textbf{12.11} \std{16.51}
& 13.44 \std{11.88}
& \textbf{27.42} \std{20.25} \\
\cmidrule(lr){2-8}
&\multirow{2}{*}{MMNL}
& GAsN
& \textbf{0.82} \std{0.41}
& 1.29 \std{0.63}
& 6.64 \std{9.54}
& \textbf{18.59} \std{16.13}
& \textbf{44.77} \std{27.14} \\
&& RAsN
& \textbf{0.82} \std{0.32}
& \textbf{3.63} \std{4.21}
& \textbf{6.68} \std{9.03}
& 16.64 \std{15.58}
& 38.75 \std{27.39} \\
\midrule
\multirow{6}{*}{0.1} & \multirow{2}{*}{MNL}
& GAsN
& \textbf{1.09} \std{0.80}
& \textbf{5.27} \std{4.22}
& 10.98 \std{5.97}
& 16.17 \std{9.30}
& 27.77 \std{13.38} \\
&& RAsN
& 0.98 \std{0.44}
& 3.59 \std{3.99}
& \textbf{12.03} \std{11.26}
& \textbf{24.06} \std{21.44}
& \textbf{52.30} \std{27.45} \\
\cmidrule(lr){2-8}
&\multirow{2}{*}{MCCM}
& GAsN
& 0.74 \std{0.37}
& \textbf{2.93} \std{1.65}
& 9.38 \std{4.90}
& \textbf{19.30} \std{10.26}
& \textbf{29.38} \std{16.42} \\
&& RAsN
& \textbf{0.90} \std{0.50}
& 1.84 \std{1.05}
& \textbf{12.42} \std{18.19}
& 13.75 \std{12.46}
& 27.30 \std{19.85} \\
\cmidrule(lr){2-8}
&\multirow{2}{*}{MMNL}
& GAsN
& \textbf{0.86} \std{0.38}
& 1.99 \std{1.00}
& \textbf{13.83} \std{22.48}
& \textbf{22.89} \std{23.31}
& 36.13 \std{26.08} \\
&& RAsN
& \textbf{0.86} \std{0.34}
& \textbf{3.12} \std{3.27}
& 4.96 \std{7.67}
& 15.66 \std{15.57}
& \textbf{40.31} \std{27.90} \\
\bottomrule
\end{tabular}%
}
\end{subtable}

\vspace{1.5em}

\begin{subtable}[t]{\textwidth}
\centering
\caption{Average pairwise Hamming distance}
\resizebox{0.9\textwidth}{!}{%
\begin{tabular}{cllccccc}
\toprule
$\beta$ & Data model & Method & $N=20$ & $N=40$ & $N=60$ & $N=80$ & $N=100$ \\
\midrule
\multirow{6}{*}{1.0}&\multirow{2}{*}{MNL}
& GAsN
& \textbf{1.56} \std{1.66}
& \textbf{2.27} \std{1.49}
& 2.73 \std{1.33}
& 2.98 \std{2.55}
& 2.44 \std{0.64} \\
&& RAsN
& 1.44 \std{0.93}
& 1.88 \std{1.95}
& \textbf{2.83} \std{1.41}
& \textbf{3.10} \std{1.73}
& \textbf{4.28} \std{2.28} \\
\cmidrule(lr){2-8}
&\multirow{2}{*}{MCCM}
& GAsN
& 0.50 \std{0.66}
& \textbf{1.51} \std{0.99}
& 2.24 \std{0.80}
& \textbf{2.19} \std{0.93}
& 2.13 \std{0.84} \\
&& RAsN
& \textbf{1.50} \std{1.83}
& 1.37 \std{0.99}
& \textbf{2.31} \std{1.84}
& 1.93 \std{1.01}
& \textbf{2.25} \std{1.04} \\
\cmidrule(lr){2-8}
&\multirow{2}{*}{MMNL}
& GAsN
& 0.62 \std{0.85}
& 0.61 \std{0.70}
& 1.53 \std{1.39}
& 2.22 \std{1.21}
& \textbf{5.65} \std{5.84} \\
&& RAsN
& \textbf{1.42} \std{2.11}
& \textbf{3.10} \std{4.28}
& \textbf{2.28} \std{3.47}
& \textbf{2.40} \std{1.22}
& 3.59 \std{2.05} \\
\midrule
\multirow{6}{*}{0.1} & \multirow{2}{*}{MNL}
& GAsN
& 1.21 \std{1.59}
& \textbf{2.19} \std{1.26}
& \textbf{2.25} \std{0.76}
& 2.14 \std{0.86}
& 2.36 \std{0.63} \\
&& RAsN
& \textbf{1.22} \std{1.09}
& 1.51 \std{1.77}
& 2.24 \std{1.31}
& \textbf{2.93} \std{1.63}
& \textbf{4.44} \std{2.29} \\
\cmidrule(lr){2-8}
&\multirow{2}{*}{MCCM}
& GAsN
& 0.63 \std{0.81}
& \textbf{1.34} \std{0.94}
& 2.10 \std{0.92}
& \textbf{2.40} \std{0.87}
& \textbf{2.30} \std{0.92} \\
&& RAsN
& \textbf{1.29} \std{1.54}
& 1.00 \std{0.81}
& \textbf{2.29} \std{1.79}
& 1.98 \std{1.23}
& 2.29 \std{1.07} \\
\cmidrule(lr){2-8}
&\multirow{2}{*}{MMNL}
& GAsN
& \textbf{1.25} \std{0.92}
& 0.78 \std{0.79}
& \textbf{2.77} \std{4.29}
& \textbf{2.76} \std{2.29}
& 2.76 \std{1.46} \\
&& RAsN
& 1.19 \std{2.16}
& \textbf{2.34} \std{2.90}
& 1.39 \std{1.14}
& 2.24 \std{1.10}
& \textbf{3.65} \std{2.05} \\
\bottomrule
\end{tabular}
}
\end{subtable}
\end{table}

\subsection{Sample Efficiency and Distribution Shift}

We further evaluate the proposed approach under two additional settings. First, we study how performance changes across different sample sizes. Second, we consider a distribution-shift setting in which the historical assortments are generated from a different policy. Together, these experiments provide a broader view of the robustness of guided diffusion under varying data conditions.

\paragraph{Sample-size setting.}
To evaluate sample efficiency, we vary the size of the offline dataset while keeping the underlying assortment optimization problem fixed. Specifically, for each setting, we generate historical data under the same choice model and compare the performance of different methods across a range of sample sizes, with
\begin{equation*}
n \in \{100, 200, 500, 1000, 2000, 5000, 10000, 20000, 50000\}.
\end{equation*}
As in the main experiments, we consider both MNL and MMNL data, and report results for two problem sizes, $N=40$ and $N=80$. For each configuration, all results are averaged over $10$ independent runs, and we report both the mean and variability across runs. For sample efficiency, performance is measured in terms of \textit{Regret}, defined as
\begin{equation*}
\text{Regret} = 1 - \text{Optimal Ratio},
\end{equation*}
so that lower values indicate solutions closer to the true optimum. Figure~\ref{fig:sample_efficiency_all} summarizes how regret changes with the amount of available offline data for the diffusion-based methods and the parametric baseline.

\paragraph{Results on sample size.}
Figure~\ref{fig:sample_efficiency_all}, Table~\ref{tab:sample_efficiency_mnl} and Table~\ref{tab:sample_efficiency_mmnl} show that the effect of sample size differs substantially across data-generating models. In the MNL setting, regret is uniformly low and generally decreases with $n$, while the correctly specified MNL-MLE baseline quickly approaches zero regret. In contrast, under MMNL data, the misspecified MNL-MLE baseline exhibits substantially larger regret across all sample sizes. The diffusion-based methods remain markedly stronger in this setting, but their performance does not improve monotonically with more data. Overall, these results suggest that the main advantage of guided diffusion lies in its robustness under model misspecification across different data regimes, rather than in a uniformly monotonic improvement with increasing sample size.

\begin{figure}[t]
\centering
\begin{subfigure}[t]{0.48\textwidth}
    \centering
    \includegraphics[width=\linewidth]{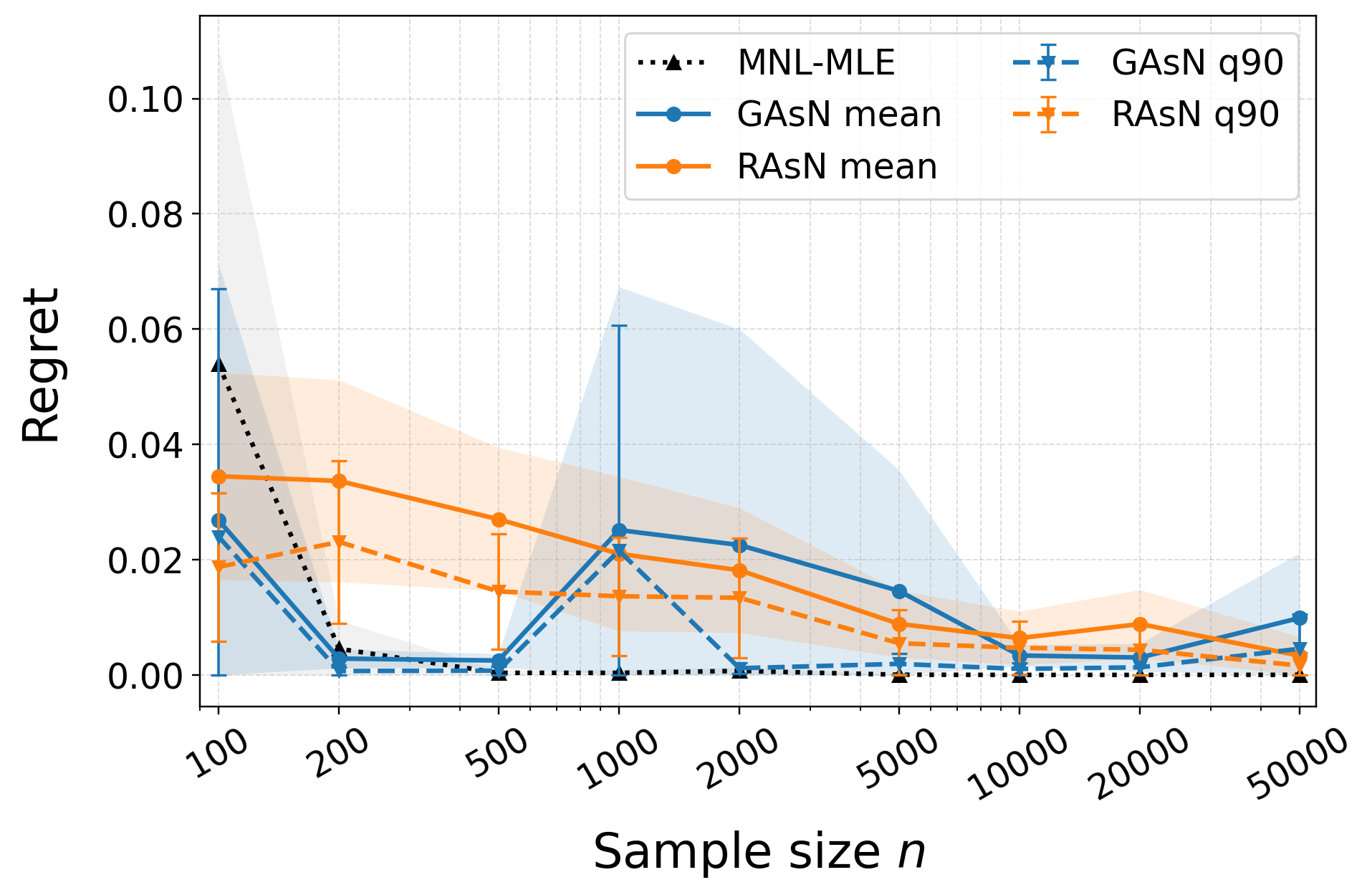}
    \caption{MNL, $N=40$}
\end{subfigure}
\hfill
\begin{subfigure}[t]{0.48\textwidth}
    \centering
    \includegraphics[width=\linewidth]{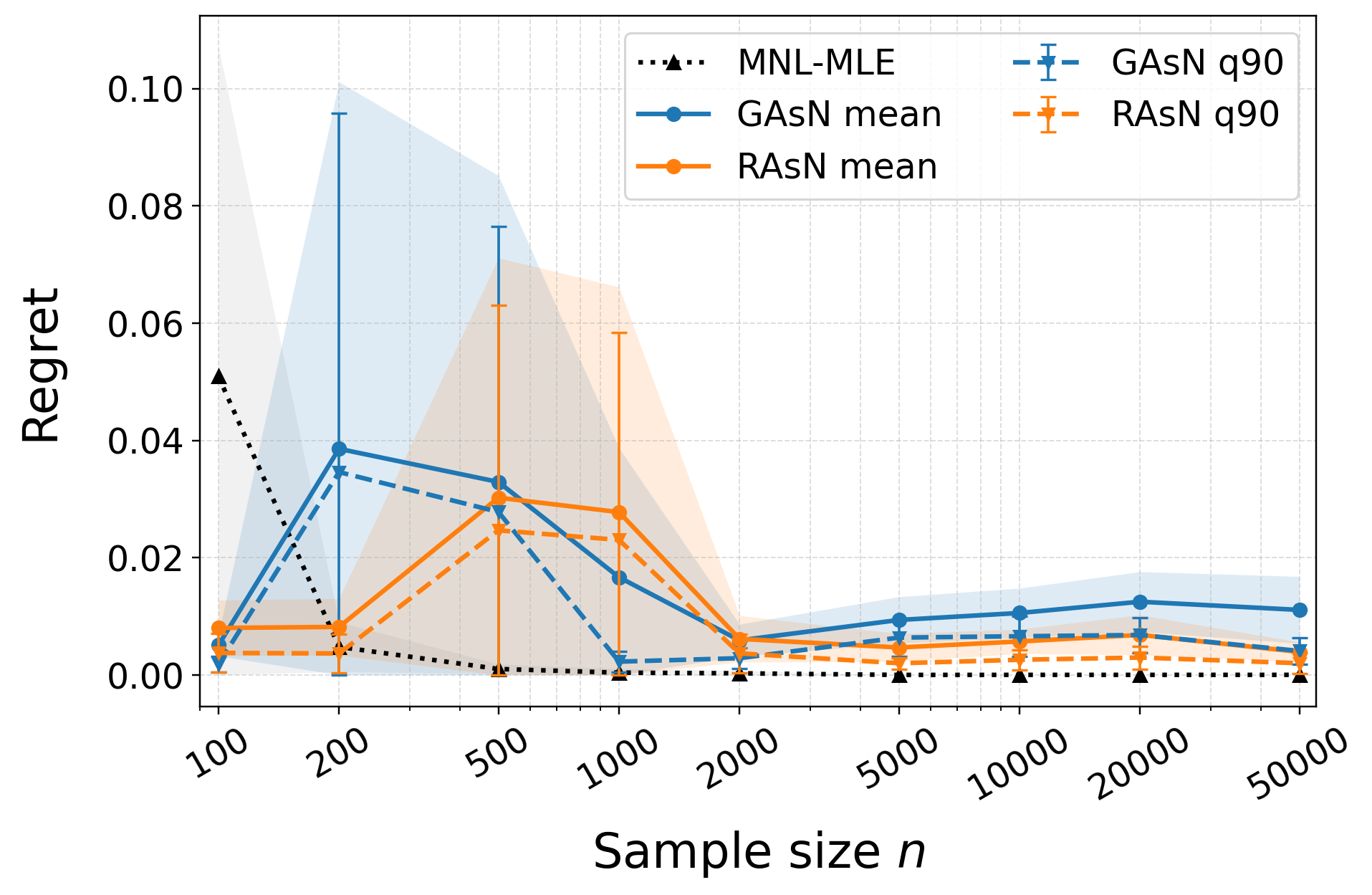}
    \caption{MNL, $N=80$}
\end{subfigure}

\vspace{0.5em}

\begin{subfigure}[t]{0.48\textwidth}
    \centering
    \includegraphics[width=\linewidth]{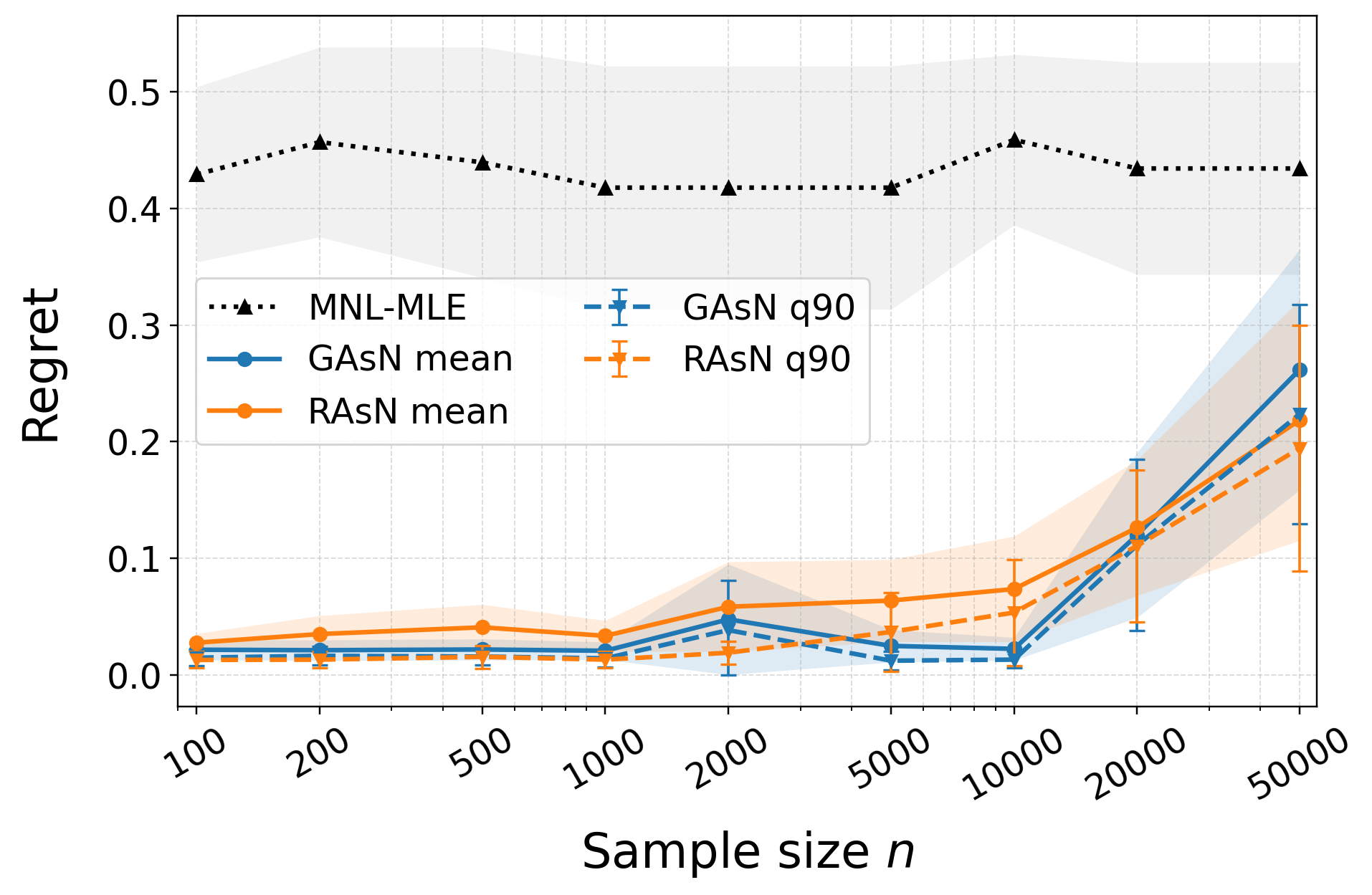}
    \caption{MMNL, $N=40$}
\end{subfigure}
\hfill
\begin{subfigure}[t]{0.48\textwidth}
    \centering
    \includegraphics[width=\linewidth]{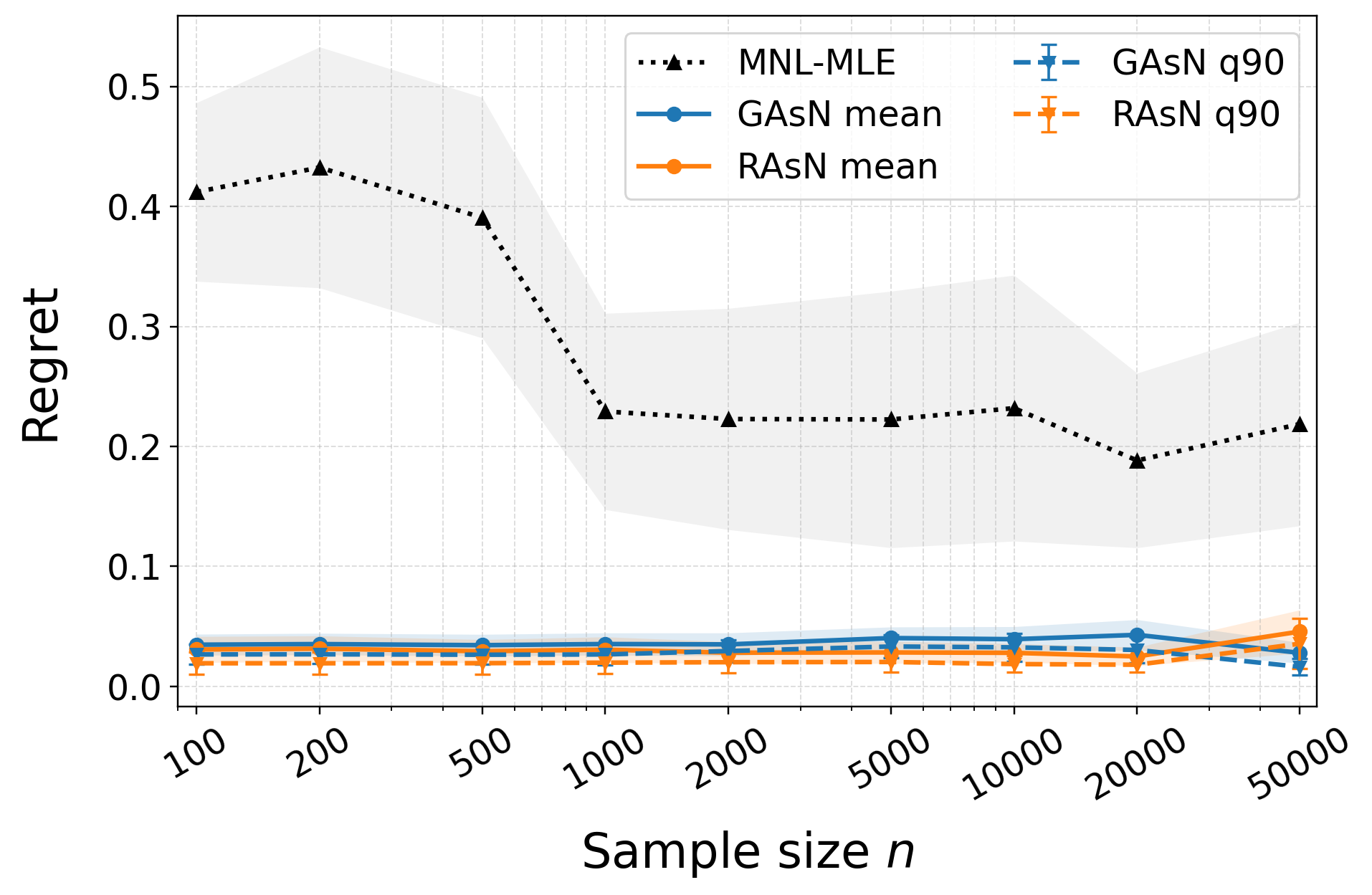}
    \caption{MMNL, $N=80$}
\end{subfigure}

\caption{Suboptimality (regret) versus sample size $n$ under MNL and MMNL choice models. Diffusion-based methods (GAsN, RAsN) achieve consistently lower regret compared to the MNL-MLE baseline, especially under model misspecification (MMNL).}
\label{fig:sample_efficiency_all}
\end{figure}

\paragraph{Distribution-shift setting.}
To further evaluate robustness beyond the data-collection setting used in the main experiments, we consider a distribution-shift setting in which historical assortments are generated from a different sampling policy. Specifically, instead of the Boltzmann policy, assortments are sampled according to a \texttt{uniform\_size\_then\_subset} procedure: for each sample, we first draw the assortment size $k$ uniformly from $[N]$, and then uniformly sample one subset of size $k$ from the combinatorial space. This produces an assortment distribution that is not Boltzmann. Given each sampled assortment, the observed choice is then generated according to the selected ground-truth choice model. Under MNL, choices are sampled from the standard softmax distribution over the offered products and the outside option. Under MMNL, choices are sampled from a mixture of type-specific MNL models with uniformly weighted latent customer types. In this way, the underlying choice mechanism is kept unchanged, while the distribution of observed assortments is shifted.

\paragraph{Results under distribution shift.}
Table~\ref{tab:dist_shift_opt_ratio} reports the optimal-ratio statistics under this shifted assortment-generation distribution. Compared with the main experiments under the Boltzmann distribution, performance drops noticeably in both the MNL and MMNL settings, especially in the mean, median, and upper-quantile statistics. Nevertheless, the maximum optimal ratios remain relatively high, indicating that guided diffusion can still recover highly competitive solutions under distribution shift. Overall, these results suggest that although the proposed approach is affected by changes in the assortment distribution, it remains reasonably robust, particularly in preserving strong best-case solution quality.

\begin{table}[ht]
\centering
\caption{Optimal ratio statistics of generated assortments under a distribution-shift setting. Historical assortments are generated from a policy different from that used in the main experiments, while observed choices are still sampled from the ground-truth choice model. All entries are reported in percentage units (\%) and shown as mean $\pm$ standard deviation over 10 runs. For each diffusion-based method, we report summary statistics over 256 generated assortments per run.}
\label{tab:dist_shift_opt_ratio}
\newcommand{\std}[1]{\ensuremath{\!{\scriptstyle \pm\, #1}}}

\resizebox{0.8\textwidth}{!}{%
\begin{tabular}{llcccc}
\toprule
\multirow{2}{*}{Method} & \multirow{2}{*}{Statistic\ } & \multicolumn{2}{c}{MNL} & \multicolumn{2}{c}{MMNL} \\
\cmidrule(r){3-4} \cmidrule(l){5-6}
 &  & $N=40$ & $N=80$ & $N=40$ & $N=80$ \\
\midrule

\multirow{4}{*}{Unguided}
& max
& 80.06 \std{6.09} & 88.91 \std{3.73}
& 65.47 \std{5.71} & 69.68 \std{7.34} \\
& mean
& 57.70 \std{0.98} & 55.68 \std{0.39}
& 38.76 \std{0.91} & 46.57 \std{0.52} \\
& \texttt{q50}
& 63.24 \std{0.07} & 55.96 \std{0.14}
& 46.10 \std{0.18} & 49.13 \std{0.08} \\
& \texttt{q90}
& 64.43 \std{0.18} & 68.21 \std{1.25}
& 48.92 \std{0.98} & 51.10 \std{0.18} \\
\midrule

\multirow{4}{*}{GAsN-guided}
& max
& 97.60 \std{1.18} & 99.17 \std{0.16}
& 97.89 \std{1.62} & \textbf{91.34} \std{7.95} \\
& mean
& 66.93 \std{0.25} & \textbf{67.28} \std{0.75}
& 60.57 \std{0.78} & \textbf{55.18} \std{0.30} \\
& \texttt{q50}
& 65.08 \std{0.14} & \textbf{63.28} \std{0.18}
& 57.51 \std{0.82} & \textbf{54.23} \std{0.29} \\
& \texttt{q90}
& 69.87 \std{0.72} & \textbf{91.12} \std{10.95}
& 71.68 \std{9.81} & \textbf{60.45} \std{0.85} \\
\midrule

\multirow{4}{*}{RAsN-guided}
& max
& \textbf{97.79} \std{0.50} & \textbf{99.39} \std{0.34}
& \textbf{98.30} \std{1.77} & 80.36 \std{7.74} \\
& mean
& \textbf{67.92} \std{0.25} & 65.07 \std{0.53}
& \textbf{66.05} \std{0.63} & 53.98 \std{0.27} \\
& \texttt{q50}
& \textbf{65.95} \std{0.25} & 62.95 \std{0.20}
& \textbf{62.96} \std{0.22} & 53.20 \std{0.31} \\
& \texttt{q90}
& \textbf{72.26} \std{0.78} & 67.26 \std{0.84}
& \textbf{78.87} \std{6.78} & 58.38 \std{0.64} \\
\bottomrule

\end{tabular}%
}
\end{table}

% \paragraph{Comparison with model-based baseline.}
% We include MCCM-EM as a baseline, which fits the ground-truth choice model under correct specification. In contrast, our method is model-agnostic and does not assume knowledge of the underlying choice model. Therefore, this experiment can be viewed as a \textit{model misspecification} setting for our approach. Despite this disadvantage, our method achieves competitive performance and often matches or exceeds MCCM-EM across different problem sizes. This suggests that the proposed diffusion-based optimization is robust to model misspecification and can effectively recover near-optimal assortments without relying on explicit parametric assumptions.

\section{Discussion and Conclusion}

In this paper, we proposed a model-agnostic framework for assortment optimization based on guided discrete diffusion. 
By treating assortments as binary vectors and performing stochastic search through a learned reverse process, our approach avoids explicit combinatorial enumeration and does not rely on restrictive parametric choice models. 
The introduction of reward-guided transitions enables the method to incorporate decision objectives directly into the generative process. 
Empirically, we show that this framework consistently produces high-quality assortments and remains effective even under model misspecification. 
Moreover, the generative nature of diffusion naturally yields a diverse set of near-optimal solutions, which is particularly useful in practical decision-making settings.

More broadly, this work highlights the potential of generative modeling as a tool for combinatorial optimization. 
Compared to classical approaches, our method offers greater flexibility and robustness, while maintaining scalability in high-dimensional settings. 
At the same time, its performance depends on the structure of the offline data and the quality of reward estimation; in particular, when the data distribution lacks informative structure (e.g., near-uniform coverage over assortments), the learned generative prior may provide limited guidance. 
Future work includes developing methods that reduce this dependence on data regularity, extending the framework to more general constraints (e.g., $As \leq b$), and exploring connections with continuous relaxation approaches such as Langevin dynamics.

%%%%%%%%%%%%-- reference --%%%%%%%%%%%
\newpage
\bibliographystyle{ims}
\bibliography{reference}

@article{dong2025pasta,
  title={PASTA: A Unified Framework for Offline Assortment Learning},
  author={Dong, Juncheng and Mo, Weibin and Qi, Zhengling and Shi, Cong and Fang, Ethan X and Tarokh, Vahid},
  journal={arXiv preprint arXiv:2510.01693},
  year={2025}
}

@article{wang2023neural,
  title={A neural network based choice model for assortment optimization},
  author={Wang, Hanzhao and Cai, Zhongze and Li, Xiaocheng and Talluri, Kalyan},
  journal={arXiv preprint arXiv:2308.05617},
  year={2023}
}

@article{han2025learning,
  title={Learning an optimal assortment policy under observational data},
  author={Han, Yuxuan and Zhong, Han and Lu, Miao and Blanchet, Jose and Zhou, Zhengyuan},
  journal={arXiv preprint arXiv:2502.06777},
  year={2025}
}

@article{austin2021structured,
  title={Structured denoising diffusion models in discrete state-spaces},
  author={Austin, Jacob and Johnson, Daniel D and Ho, Jonathan and Tarlow, Daniel and Van Den Berg, Rianne},
  journal={Advances in neural information processing systems},
  volume={34},
  pages={17981--17993},
  year={2021}
}

@inproceedings{ziebart2008maximum,
  title={Maximum entropy inverse reinforcement learning.},
  author={Ziebart, Brian D and Maas, Andrew L and Bagnell, J Andrew and Dey, Anind K and others},
  booktitle={Aaai},
  volume={8},
  pages={1433--1438},
  year={2008},
  organization={Chicago, IL, USA}
}

@article{wulfmeier2015maximum,
  title={Maximum entropy deep inverse reinforcement learning},
  author={Wulfmeier, Markus and Ondruska, Peter and Posner, Ingmar},
  journal={arXiv preprint arXiv:1507.04888},
  year={2015}
}

@inproceedings{snoswell2020revisiting,
  title={Revisiting maximum entropy inverse reinforcement learning: New perspectives and algorithms},
  author={Snoswell, Aaron J and Singh, Surya PN and Ye, Nan},
  booktitle={2020 IEEE Symposium Series on Computational Intelligence (SSCI)},
  pages={241--249},
  year={2020},
  organization={IEEE}
}

@article{qi2020data,
  title={Data-driven research in retail operations—a review},
  author={Qi, Meng and Mak, Ho-Yin and Shen, Zuo-Jun Max},
  journal={Naval Research Logistics (NRL)},
  volume={67},
  number={8},
  pages={595--616},
  year={2020},
  publisher={Wiley Online Library}
}

@article{rooderkerk2013optimizing,
  title={Optimizing retail assortments},
  author={Rooderkerk, Robert P and Van Heerde, Harald J and Bijmolt, Tammo HA},
  journal={Marketing Science},
  volume={32},
  number={5},
  pages={699--715},
  year={2013},
  publisher={INFORMS}
}

@article{jagabathula2014assortment,
  title={Assortment optimization under general choice},
  author={Jagabathula, Srikanth},
  journal={Available at SSRN 2512831},
  year={2014}
}

@article{ho2020denoising,
  title={Denoising diffusion probabilistic models},
  author={Ho, Jonathan and Jain, Ajay and Abbeel, Pieter},
  journal={Advances in neural information processing systems},
  volume={33},
  pages={6840--6851},
  year={2020}
}

@article{song2020denoising,
  title={Denoising diffusion implicit models},
  author={Song, Jiaming and Meng, Chenlin and Ermon, Stefano},
  journal={arXiv preprint arXiv:2010.02502},
  year={2020}
}

@article{song2020score,
  title={Score-based generative modeling through stochastic differential equations},
  author={Song, Yang and Sohl-Dickstein, Jascha and Kingma, Diederik P and Kumar, Abhishek and Ermon, Stefano and Poole, Ben},
  journal={arXiv preprint arXiv:2011.13456},
  year={2020}
}

@article{sun2022score,
  title={Score-based continuous-time discrete diffusion models},
  author={Sun, Haoran and Yu, Lijun and Dai, Bo and Schuurmans, Dale and Dai, Hanjun},
  journal={arXiv preprint arXiv:2211.16750},
  year={2022}
}

@inproceedings{schiff2025simple,
  title={Simple guidance mechanisms for discrete diffusion models},
  author={Schiff, Yair and Sahoo, Subham Sekhar and Phung, Hao and Wang, Guanghan and Rush, Alexander and Kuleshov, Volodymyr and Dalla-Torre, Hugo and Boshar, Sam and de Almeida, Bernardo P and Pierrot, Thomas},
  booktitle={International Conference on Learning Representations},
  volume={2025},
  pages={44153},
  year={2025}
}

@inproceedings{bansal2023universal,
  title={Universal guidance for diffusion models},
  author={Bansal, Arpit and Chu, Hong-Min and Schwarzschild, Avi and Sengupta, Soumyadip and Goldblum, Micah and Geiping, Jonas and Goldstein, Tom},
  booktitle={Proceedings of the IEEE/CVF conference on computer vision and pattern recognition},
  pages={843--852},
  year={2023}
}

@article{aloui2026score,
  title={Score-based Metropolis-Hastings for Fractional Langevin Algorithms},
  author={Aloui, Ahmed and Liao, Junyi and Hasan, Ali and Blanchet, Jose and Tarokh, Vahid},
  journal={arXiv preprint arXiv:2602.00835},
  year={2026}
}

@inproceedings{wang2025fractional,
  title={Fractional Langevin Dynamics for Combinatorial Optimization via Polynomial-Time Escape},
  author={Wang, Shiyue and Guo, Ziao and Lu, Changhong and Yan, Junchi},
  booktitle={The Thirty-ninth Annual Conference on Neural Information Processing Systems},
  year={2025}
}

@article{sun2023difusco,
  title={Difusco: Graph-based diffusion solvers for combinatorial optimization},
  author={Sun, Zhiqing and Yang, Yiming},
  journal={Advances in neural information processing systems},
  volume={36},
  pages={3706--3731},
  year={2023}
}

@article{sanokowski2024diffusion,
  title={A diffusion model framework for unsupervised neural combinatorial optimization},
  author={Sanokowski, Sebastian and Hochreiter, Sepp and Lehner, Sebastian},
  journal={arXiv preprint arXiv:2406.01661},
  year={2024}
}

@article{zhao2024disco,
  title={DISCO: Efficient diffusion solver for large-scale combinatorial optimization problems},
  author={Zhao, Hang and Yu, Kexiong and Huang, Yuhang and Yi, Renjiao and Zhu, Chenyang and Xu, Kai},
  journal={arXiv preprint arXiv:2406.19705},
  year={2024}
}

@article{mendez2014branch,
  title={A branch-and-cut algorithm for the latent-class logit assortment problem},
  author={M{\'e}ndez-D{\'\i}az, Isabel and Miranda-Bront, Juan Jos{\'e} and Vulcano, Gustavo and Zabala, Paula},
  journal={Discrete Applied Mathematics},
  volume={164},
  pages={246--263},
  year={2014},
  publisher={Elsevier}
}

@article{talluri2004revenue,
  title={Revenue management under a general discrete choice model of consumer behavior},
  author={Talluri, Kalyan and Van Ryzin, Garrett},
  journal={Management science},
  volume={50},
  number={1},
  pages={15--33},
  year={2004},
  publisher={INFORMS}
}

@article{blanchet2016markov,
  title={A Markov chain approximation to choice modeling},
  author={Blanchet, Jose and Gallego, Guillermo and Goyal, Vineet},
  journal={Operations research},
  volume={64},
  number={4},
  pages={886--905},
  year={2016},
  publisher={INFORMS}
}

@article{wang2023transformer,
  title={Transformer choice net: A transformer neural network for choice prediction},
  author={Wang, Hanzhao and Li, Xiaocheng and Talluri, Kalyan},
  journal={arXiv preprint arXiv:2310.08716},
  year={2023}
}

@article{mcfadden2000mixed,
  title={Mixed MNL models for discrete response},
  author={McFadden, Daniel and Train, Kenneth},
  journal={Journal of applied Econometrics},
  volume={15},
  number={5},
  pages={447--470},
  year={2000},
  publisher={Wiley Online Library}
}

@article{rusmevichientong2010dynamic,
  title={Dynamic assortment optimization with a multinomial logit choice model and capacity constraint},
  author={Rusmevichientong, Paat and Shen, Zuo-Jun Max and Shmoys, David B},
  journal={Operations research},
  volume={58},
  number={6},
  pages={1666--1680},
  year={2010},
  publisher={INFORMS}
}

@incollection{mcfadden1972conditional,
  title={Conditional logit analysis of qualitative choice behavior},
  booktitle = {Fontiers in {Econometrics}},
	publisher = {Academic press},
	author = {McFadden, Daniel},
	editor = {Zarembka, Paul},
	year = {1974},
	pages = {105--142}
}

@article{feng2022consumer,
  title={Consumer choice models and estimation: A review and extension},
  author={Feng, Qi and Shanthikumar, J George and Xue, Mengying},
  journal={Production and Operations Management},
  volume={31},
  number={2},
  pages={847--867},
  year={2022},
  publisher={SAGE Publications Sage CA: Los Angeles, CA}
}

@book{manski1981structural,
  title={Structural analysis of discrete data with econometric applications},
  author={Manski, Charles F and McFadden, Daniel and others},
  year={1981},
  publisher={MIT press Cambridge, MA}
}

@article{chen2024overview,
  title={An overview of diffusion models: Applications, guided generation, statistical rates and optimization},
  author={Chen, Minshuo and Mei, Song and Fan, Jianqing and Wang, Mengdi},
  journal={arXiv preprint arXiv:2404.07771},
  year={2024}
}

@article{fisher2014demand,
  title={A demand estimation procedure for retail assortment optimization with results from implementations},
  author={Fisher, Marshall and Vaidyanathan, Ramnath},
  journal={Management Science},
  volume={60},
  number={10},
  pages={2401--2415},
  year={2014},
  publisher={INFORMS}
}

@article{liu2025simultaneous,
  title={Simultaneous vs. sequential: Optimal assortment recommendation in multistore retailing},
  author={Liu, Yicheng and Chen, Xiao Alison and Liu, Yan and Wang, Zizhuo},
  journal={Manufacturing \& Service Operations Management},
  volume={27},
  number={3},
  pages={825--842},
  year={2025},
  publisher={INFORMS}
}

@article{strauss2018review,
  title={A review of choice-based revenue management: Theory and methods},
  author={Strauss, Arne K and Klein, Robert and Steinhardt, Claudius},
  journal={European journal of operational research},
  volume={271},
  number={2},
  pages={375--387},
  year={2018},
  publisher={Elsevier}
}

@article{desir2022capacitated,
  title={Capacitated assortment optimization: Hardness and approximation},
  author={D{\'e}sir, Antoine and Goyal, Vineet and Zhang, Jiawei},
  journal={Operations Research},
  volume={70},
  number={2},
  pages={893--904},
  year={2022},
  publisher={INFORMS}
}

@book{train2009discrete,
  title={Discrete choice methods with simulation},
  author={Train, Kenneth E},
  year={2009},
  publisher={Cambridge university press}
}

@article{farias2013nonparametric,
  title={A nonparametric approach to modeling choice with limited data},
  author={Farias, Vivek F and Jagabathula, Srikanth and Shah, Devavrat},
  journal={Management science},
  volume={59},
  number={2},
  pages={305--322},
  year={2013},
  publisher={INFORMS}
}

@article{bertsimas2015data,
  title={Data-driven assortment optimization},
  author={Bertsimas, Dimitris and Mi{\v{s}}ic, Velibor V},
  journal={Management Science},
  volume={1},
  pages={1--35},
  year={2015}
}

@article{csimcsek2018expectation,
  title={An expectation-maximization algorithm to estimate the parameters of the Markov chain choice model},
  author={{\c{S}}im{\c{s}}ek, A Serdar and Topaloglu, Huseyin},
  journal={Operations Research},
  volume={66},
  number={3},
  pages={748--760},
  year={2018},
  publisher={INFORMS}
}

@article{dudik2011doubly,
  title={Doubly robust policy evaluation and learning},
  author={Dud{\'\i}k, Miroslav and Langford, John and Li, Lihong},
  journal={arXiv preprint arXiv:1103.4601},
  year={2011}
}

@inproceedings{swaminathan2015counterfactual,
  title={Counterfactual risk minimization: Learning from logged bandit feedback},
  author={Swaminathan, Adith and Joachims, Thorsten},
  booktitle={International conference on machine learning},
  pages={814--823},
  year={2015},
  organization={PMLR}
}

@article{levine2020offline,
  title={Offline reinforcement learning: Tutorial, review, and perspectives on open problems},
  author={Levine, Sergey and Kumar, Aviral and Tucker, George and Fu, Justin},
  journal={arXiv preprint arXiv:2005.01643},
  year={2020}
}

@article{prudencio2023survey,
  title={A survey on offline reinforcement learning: Taxonomy, review, and open problems},
  author={Prudencio, Rafael Figueiredo and Maximo, Marcos ROA and Colombini, Esther Luna},
  journal={IEEE transactions on neural networks and learning systems},
  volume={35},
  number={8},
  pages={10237--10257},
  year={2023},
  publisher={IEEE}
}

@inproceedings{shimizu2024effective,
  title={Effective off-policy evaluation and learning in contextual combinatorial bandits},
  author={Shimizu, Tatsuhiro and Tanaka, Koichi and Kishimoto, Ren and Kiyohara, Haruka and Nomura, Masahiro and Saito, Yuta},
  booktitle={Proceedings of the 18th ACM Conference on Recommender Systems},
  pages={733--741},
  year={2024}
}

@article{saure2013optimal,
  title={Optimal dynamic assortment planning with demand learning},
  author={Saur{\'e}, Denis and Zeevi, Assaf},
  journal={Manufacturing \& Service Operations Management},
  volume={15},
  number={3},
  pages={387--404},
  year={2013},
  publisher={INFORMS}
}

@article{gong2022online,
  title={Online assortment optimization with reusable resources},
  author={Gong, Xiao-Yue and Goyal, Vineet and Iyengar, Garud N and Simchi-Levi, David and Udwani, Rajan and Wang, Shuangyu},
  journal={Management Science},
  volume={68},
  number={7},
  pages={4772--4785},
  year={2022},
  publisher={INFORMS}
}

@article{li2025online,
  title={Online learning for constrained assortment optimization under markov chain choice model},
  author={Li, Shukai and Luo, Qi and Huang, Zhiyuan and Shi, Cong},
  journal={Operations research},
  volume={73},
  number={1},
  pages={109--138},
  year={2025},
  publisher={Informs}
}

@inproceedings{fujimoto2019off,
  title={Off-policy deep reinforcement learning without exploration},
  author={Fujimoto, Scott and Meger, David and Precup, Doina},
  booktitle={International conference on machine learning},
  pages={2052--2062},
  year={2019},
  organization={PMLR}
}

@article{kumar2020conservative,
  title={Conservative q-learning for offline reinforcement learning},
  author={Kumar, Aviral and Zhou, Aurick and Tucker, George and Levine, Sergey},
  journal={Advances in neural information processing systems},
  volume={33},
  pages={1179--1191},
  year={2020}
}

@inproceedings{joachims2018deep,
  title={Deep learning with logged bandit feedback},
  author={Joachims, Thorsten and Swaminathan, Adith and De Rijke, Maarten},
  booktitle={International Conference on Learning Representations},
  year={2018}
}

@article{chen2022decision,
  title={Decision forest: A nonparametric approach to modeling irrational choice},
  author={Chen, Yi-Chun and Mi{\v{s}}i{\'c}, Velibor V},
  journal={Management Science},
  volume={68},
  number={10},
  pages={7090--7111},
  year={2022},
  publisher={INFORMS}
}

@article{jagabathula2020conditional,
  title={A conditional gradient approach for nonparametric estimation of mixing distributions},
  author={Jagabathula, Srikanth and Subramanian, Lakshminarayanan and Venkataraman, Ashwin},
  journal={Management Science},
  volume={66},
  number={8},
  pages={3635--3656},
  year={2020},
  publisher={INFORMS}
}

@article{kok2008assortment,
  title={Assortment planning: Review of literature and industry practice},
  author={K{\"o}k, A G{\"u}rhan and Fisher, Marshall L and Vaidyanathan, Ramnath},
  journal={Retail supply chain management: Quantitative models and empirical studies},
  pages={99--153},
  year={2008},
  publisher={Springer}
}

@article{sturt2025value,
  title={The value of robust assortment optimization under ranking-based choice models},
  author={Sturt, Bradley},
  journal={Management Science},
  volume={71},
  number={5},
  pages={4246--4265},
  year={2025},
  publisher={INFORMS}
}

@article{bront2009column,
  title={A column generation algorithm for choice-based network revenue management},
  author={Bront, Juan Jos{\'e} Miranda and M{\'e}ndez-D{\'\i}az, Isabel and Vulcano, Gustavo},
  journal={Operations research},
  volume={57},
  number={3},
  pages={769--784},
  year={2009},
  publisher={INFORMS}
}

@article{desir2020constrained,
  title={Constrained assortment optimization under the Markov chain--based choice model},
  author={D{\'e}sir, Antoine and Goyal, Vineet and Segev, Danny and Ye, Chun},
  journal={Management Science},
  volume={66},
  number={2},
  pages={698--721},
  year={2020},
  publisher={INFORMS}
}

@article{rusmevichientong2014assortment,
  title={Assortment optimization under the multinomial logit model with random choice parameters},
  author={Rusmevichientong, Paat and Shmoys, David and Tong, Chaoxu and Topaloglu, Huseyin},
  journal={Production and Operations Management},
  volume={23},
  number={11},
  pages={2023--2039},
  year={2014},
  publisher={SAGE Publications Sage CA: Los Angeles, CA}
}

@article{shannon1948mathematical,
  title={A mathematical theory of communication},
  author={Shannon, Claude Elwood},
  journal={The Bell system technical journal},
  volume={27},
  number={3},
  pages={379--423},
  year={1948},
  publisher={Nokia Bell Labs}
}

@inproceedings{schulman2015trust,
  title={Trust region policy optimization},
  author={Schulman, John and Levine, Sergey and Abbeel, Pieter and Jordan, Michael and Moritz, Philipp},
  booktitle={International conference on machine learning},
  pages={1889--1897},
  year={2015},
  organization={PMLR}
}

@article{schulman2017proximal,
  title={Proximal policy optimization algorithms},
  author={Schulman, John and Wolski, Filip and Dhariwal, Prafulla and Radford, Alec and Klimov, Oleg},
  journal={arXiv preprint arXiv:1707.06347},
  year={2017}
}

\newpage
\appendix
\section{Experimental Details}
\subsection{Choice Models}\label{sec:choicemodels}
In this subsection, we introduce several benchmark choice models based on different assumptions about customer behavior. Given an assortment $s\in\mathscr{S}$, each model defines a choice policy $p(\cdot \mid s)$ over $s \cup \{0\}$, where $0$ denotes the no-purchase option.

\paragraph{Multinomial Logit (MNL).}
The MNL model is a classical random-utility model \citep{mcfadden1972conditional, talluri2004revenue, train2009discrete}. For each $i \in [N] \cup \{0\}$, let
\begin{equation*}
U_i = u_i + \varepsilon_i,
\end{equation*}
where $u_i$ is the deterministic utility component and the random noise terms $\{\varepsilon_i\}$ are i.i.d.\ type-I extreme-value (Gumbel). Since only utility differences matter, one typically normalizes the outside option utility, e.g., $u_0 = 0$. Under this assumption, the choice probabilities admit the closed-form expression
\begin{equation*}
p(i \mid s)
=
\frac{\exp(u_i)}{\sum_{j \in s\,\cup\,\{0\}} \exp(u_j)},
\qquad i \in s \cup \{0\}.
\end{equation*}
A key advantage of the MNL model is that the assortment optimization problem admits an efficient solution. In particular, it is well known that an optimal assortment can be obtained via a revenue-ordered (RO) policy, which selects a prefix of products sorted by revenue \citep{talluri2004revenue}. This structural property enables polynomial-time optimization and makes MNL a widely used benchmark in assortment optimization.

\textit{Parameter generation.}
To generate synthetic MNL instances, we assign utilities $\{u_i\}_{i=0}^N$ independently. The no-purchase option is assigned zero utility, i.e., $u_0 = 0$, while product utilities are sampled as
\begin{equation*}
u_1,\cdots,u_N \overset{\mathrm{i.i.d.}}{\sim} \mathcal{N}(0,\, 1).
\end{equation*}
This construction yields a homogeneous preference structure across products, consistent with the standard MNL model.

\paragraph{Markov Chain Choice Model (MCCM).}
The Markov chain choice model is introduced by \cite{blanchet2016markov}, which defines a discrete-time Markov chain on the state space $[N] \cup \{0\}$. A customer starts from an initial state distributed according to $\lambda\in\Delta([N]\cup\{0\})$, i.e. $$\sum_{i \in [N]\,\cup\,\{0\}} \lambda_i = 1,\quad\lambda_0,\lambda_1,\cdots,\lambda_N\geq 0.$$ If the current state belongs to $s \cup \{0\}$, the process stops and that state is chosen; otherwise, the customer transitions according to a transition matrix $\rho$, where $\rho_{ij} \ge 0$ and $\sum_{j \in [N]\,\cup\,\{0\}} \rho_{ij} = 1$. Let $\{X_t\}_{t \ge 0}$ denote the chain and define the first hitting time
\begin{equation*}
\tau = \inf\{t \ge 0 : X_t \in s \cup \{0\}\}.
\end{equation*}
Then the induced choice probabilities are
\begin{equation*}
p(i \mid s)
=
\mathbb{P}(X_\tau = i),
\qquad i \in s \cup \{0\}.
\end{equation*}
Compared with MNL, the MCCM framework can capture richer substitution patterns, while the MNL model can be viewed as a special case of MCCM \citep{blanchet2016markov}. Parameter estimation for MCCM can be efficiently performed via expectation-maximization (EM) algorithms \citep{csimcsek2018expectation}. Moreover, \cite{blanchet2016markov} shows that the assortment optimization problem under MCCM admits a polynomial-time solution via a linear programming (LP)-based formulation, making it a flexible yet tractable extension of classical choice models.

\textit{Parameter generation.}
To generate synthetic MCCM instances, we randomly sample the initial distribution $\lambda$ and transition matrix $\rho$. Specifically, we sample $\lambda$ from a symmetric Dirichlet distribution:
\[
\lambda \sim \mathrm{Dirichlet}(\alpha \mathbf{1}_{N+1}),
\]
where $\alpha > 0$ is a concentration parameter.

The transition matrix $\rho$ is constructed row-wise. The no-purchase state is made absorbing, i.e., $\rho_{00} = 1$. For each product state $i \in [N]$, we sample transition probabilities over all other states using a Dirichlet distribution:
\[
(\rho_{i,0}, \rho_{i,1}, \cdots, \rho_{i,i-1}, \rho_{i,i+1}, \cdots, \rho_{i,N})
\sim \mathrm{Dirichlet}(\alpha \mathbf{1}_N),
\qquad \rho_{ii} = 0.
\]
This construction provides a flexible and unbiased way to generate valid stochastic matrices while avoiding degenerate structures. The symmetric Dirichlet distribution induces diverse transition patterns across products, enabling rich context-dependent substitution effects. In our experiments, we set $\alpha = 1$ to obtain moderately heterogeneous yet well-conditioned instances.

\paragraph{Mixed Multinomial Logit (MMNL).}
MMNL extends MNL by allowing customer-level preference heterogeneity \citep{mcfadden2000mixed, train2009discrete}. In our experiments, we use a version with a finite number of customer types, which is also standard in assortment optimization \citep{mendez2014branch}. Let $\mathcal{C}$ be the set of customer types, let $\alpha_c$ be the probability of type $c$, and let $u_{c,i}$ be the utility of item $i$ for type $c$. Then
\begin{equation*}
p(i \mid s)
=
\sum_{c \in \mathcal{C}} \alpha_c
\frac{\exp(u_{c,i})}{\sum_{j \in s\,\cup\,\{0\}} \exp(u_{c,j})},
\qquad i \in s \cup \{0\},
\end{equation*}
where $\alpha_c \ge 0$ and $\sum_{c \in \mathcal{C}} \alpha_c = 1$. In other words, the population is modeled as a mixture of finitely many MNL-type customer groups with different utility parameters.

While MMNL provides a significantly more expressive model than MNL, it also introduces substantial computational challenges. For small to moderate problem sizes, or for instances with special structure, exact or near-exact solutions can be obtained via mixed-integer or branch-and-bound methods~\citep{mendez2014branch, rusmevichientong2014assortment}. However, in general, assortment optimization under MMNL is NP-hard and lacks efficient exact solution methods~\citep{bront2009column, rusmevichientong2014assortment}.

\textit{Parameter generation.}
To generate synthetic MMNL instances, we consider a finite mixture model with $|\mathcal{C}| = 5$ customer types and uniform mixing weights $\alpha_c = 0.2$ for all $c \in \mathcal{C}$. For each type $c\in \mathcal{C}$, we specify type-dependent utilities $\{u_{c,i}\}_{i=0}^N$ as follows:
\begin{equation*}
u_{c,i} =
\begin{cases}
0, & i = 0, \\[4pt]
\mathcal{N}(c + N/5,\, 1), 
& (c-1)\tfrac{N}{5} + 1 \le i \le \tfrac{cN}{5}, \\[4pt]
\mathcal{N}(-1,\, 1), & \text{otherwise}.
\end{cases}
\end{equation*}
In other words, the no-purchase option is assigned zero utility for every customer type, and the $N$ products are partitioned into five disjoint groups of equal size. For each type $c$, products in the $c$-th group are assigned higher utilities, while all remaining products receive lower utilities.

This construction induces structured preference heterogeneity, where each customer type strongly favors a distinct subset of products while maintaining mild overlap across types. As a result, the induced choice behavior exhibits complex substitution patterns that cannot be captured by homogeneous models such as MNL.

\subsection{Neural Choice Models}
Neural choice models replace a fixed parametric substitution structure by a flexible function class learned directly from data \citep{wang2023neural, wang2023transformer}. For neural models, we encode each assortment $s \in \mathscr{S}$ by its binary incidence vector in $\{0,1\}^N$, where the $i$th entry equals $1$ if and only if item $i$ is offered. Given this representation, a neural network outputs logits for the outside option and all products, followed by a masked softmax that enforces feasibility:
\begin{equation*}
\mathbf{Y} = g(s; \theta) \in \Delta([N] \cup \{0\}),
\end{equation*}
where $Y_a = p_\theta(a\,|\,s)$ for $a \in [N] \cup \{0\}$ and $p_\theta(i\,|\, s)=0$ whenever $i \notin s$. This formulation is expressive enough to capture nonlinear interactions and complex substitution effects that are difficult to specify in closed form.

\subsubsection{Architectures and Implementation}\label{sec:implementation}
In our experiments, we mainly tested two architectures from \citet{wang2023neural}. The outside option is handled by a separate logit that is always available.

\paragraph{Gated-Assort-Net (GAsN).}
GAsN is a feed-forward network that maps the assortment vector to product logits, followed by a masking step that enforces zero probability for unavailable products. Let $\mathbf{z}_0 = s$. For hidden layers $l = 1, \dots, L$, we use
\begin{equation*}
    \mathbf{z}_l = (\mathbf{W}_l \mathbf{z}_{l-1} + \mathbf{b}_l)^+,
\end{equation*}
where $(\cdot)^+ = \max(\cdot, 0)$ is applied elementwise. At the output layer, the product scores are gated by the assortment mask through a coordinatewise multiplication, so products not contained in $s$ are removed before the final normalization. The resulting masked product scores are then combined with an outside-option logit and passed through a softmax over $[N] \cup \{0\}$. Therefore, products outside the offered assortment automatically receive probability zero.

\paragraph{Residual-Assort-Net (RAsN).}
RAsN uses the same masked output layer as GAsN, but replaces the hidden transformation by residual blocks:
\begin{equation*}
    \mathbf{z}_l = (\mathbf{W}_l \mathbf{z}_{l-1} + \mathbf{b}_l)^+ + \mathbf{z}_{l-1}.
\end{equation*}
The skip connection helps preserve information from earlier layers and stabilizes training. As in GAsN, the output layer applies a coordinatewise multiplication with the assortment mask before normalization, which retains the support constraint $p_\theta(i \mid s)=0$ for unavailable items.

\paragraph{Discrete Diffusion Model.}
The diffusion model follows the discrete denoising formulation introduced in \citet{austin2021structured} and detailed in Section~\ref{sec:diffusion}, specialized to the binary state space $\{0,1\}^N$. In our setting, the forward kernel factorizes across coordinates as
\begin{equation*}
q_t(s_t \mid s_{t-1})
=
\prod_{i=1}^N \left[(1-\beta_t)\mathbf{1}\{s_{t,i}=s_{t-1,i}\} + \frac{\beta_t}{2}\right],
\end{equation*}
while the reverse model is parameterized as
\begin{equation*}
p_\phi(s_{t-1} \mid s_t,t)
=
\prod_{i=1}^N \mathrm{Bernoulli}\!\left(\sigma\bigl(g_\phi^{(i)}(s_t,t)\bigr)\right).
\end{equation*}
This is the baseline binary diffusion structure inherited from \cite{austin2021structured}; our modification is to add the reward-guidance term to the reverse logits during sampling, as described in Section~\ref{sec:diffusion}.

\paragraph{Implementation details.}
We use GAsN with a single hidden layer of width $N$ and RAsN with a single residual block of width $N$, where $N$ is the number of products. The diffusion model uses a two-layer MLP with hidden sizes $[128,128]$ and $T=100$ diffusion steps. The forward corruption schedule is linear, with $\beta_t$ ranging from $10^{-4}$ to $0.1$. For guided sampling, we use the schedule \eqref{eq:scheduler} in Section~\ref{sec:diffusion} with $\lambda_{\max}=1000$ and $\gamma=3$.

\section{Numerical Results}

In this section, we provide additional numerical results that complement the main text. In particular, we include detailed tables reporting optimal ratio statistics across all experimental settings, which are omitted from the main paper due to space constraints. These tables present comprehensive summary statistics for both parametric baselines and diffusion-based methods, including multiple quantiles and variability measures over repeated runs. Together with the figures in the main text, these results offer a more complete view of the performance and stability of the proposed approach.

\begin{table}[htbp]
\centering
\caption{Optimal ratio statistics of generated assortments on MNL data. All entries are reported in percentage units (\%) and shown as mean $\pm$ standard deviation over 10 runs. The ground-truth choice model is MNL, and MNL-MLE is the correctly specified parametric baseline. For this baseline, we report the quality of the single solution returned by the method; for GAsN and RAsN-guided methods, we report summary statistics over generated 256 assortments. * marks the oracle parametric model (i.e., the model class matches the ground-truth choice model).}
\label{tab:mnl_opt_ratio}
\newcommand{\std}[1]{\ensuremath{\!{\scriptstyle \pm\, #1}}}
\begin{subtable}[t]{\textwidth}
\centering
\caption{$\beta=1.0$}
\resizebox{0.92\textwidth}{!}{%
\begin{tabular}{llccccc}
\toprule
Method & Statistic & $N=20$ & $N=40$ & $N=60$ & $N=80$ & $N=100$ \\
\midrule
MNL-MLE* & exact sol.
& {100.00} \std{0.00}
& {100.00} \std{0.00}
& {100.00} \std{0.00}
& {100.00} \std{0.00}
& {99.99} \std{0.01} \\
\midrule
\multirow{4}{*}{Unguided}
& max
& 89.95 \std{6.12}
& 81.14 \std{7.54}
& 76.67 \std{4.82}
& 72.24 \std{3.35}
& 68.95 \std{2.72} \\
& mean
& 64.73 \std{6.34}
& 61.43 \std{8.16}
& 60.12 \std{4.26}
& 57.60 \std{5.78}
& 55.95 \std{2.83} \\
& \texttt{q50}
& 65.11 \std{6.71}
& 62.26 \std{8.59}
& 60.77 \std{4.83}
& 57.75 \std{6.08}
& 55.87 \std{2.91} \\
& \texttt{q90}
& 76.71 \std{5.43}
& 72.25 \std{8.22}
& 68.01 \std{4.87}
& 64.02 \std{5.84}
& 61.76 \std{2.81} \\
\midrule
\multirow{4}{*}{GAsN-guided}
& max
& \textbf{99.93} \std{0.12}
& \textbf{99.96} \std{0.13}
& \textbf{99.78} \std{0.16}
& 97.62 \std{6.07}
& 99.56 \std{0.28} \\
& mean
& 99.17 \std{1.79}
& \textbf{99.65} \std{0.41}
& \textbf{99.17} \std{0.44}
& 96.37 \std{7.75}
& \textbf{98.82} \std{0.61} \\
& \texttt{q50}
& 99.11 \std{1.98}
& \textbf{99.63} \std{0.46}
& 99.23 \std{0.44}
& 96.42 \std{7.52}
& 98.85 \std{0.66} \\
& \texttt{q90}
& 99.33 \std{1.56}
& \textbf{99.85} \std{0.27}
& 99.50 \std{0.34}
& 97.14 \std{6.93}
& 99.24 \std{0.45} \\
\midrule
\multirow{4}{*}{RAsN-guided}
& max
& 99.80 \std{0.55}
& 99.83 \std{0.35}
& 99.59 \std{1.09}
& \textbf{99.91} \std{0.14}
& \textbf{99.80} \std{0.28} \\
& mean
& \textbf{99.56} \std{0.55}
& 99.01 \std{0.97}
& 98.96 \std{2.25}
& \textbf{99.36} \std{0.41}
& 98.78 \std{0.65} \\
& \texttt{q50}
& \textbf{99.62} \std{0.58}
& 99.06 \std{0.96}
& \textbf{99.39} \std{1.04}
& \textbf{99.39} \std{0.40}
& \textbf{98.87} \std{0.77} \\
& \texttt{q90}
& \textbf{99.72} \std{0.55}
& 99.53 \std{0.73}
& \textbf{99.54} \std{1.07}
& \textbf{99.73} \std{0.25}
& \textbf{99.43} \std{0.50} \\
\bottomrule
\end{tabular}%
}
\end{subtable}

\vspace{0.8em}

\begin{subtable}[t]{\textwidth}
\centering
\caption{$\beta=0.1$}
\resizebox{0.92\textwidth}{!}{%
\begin{tabular}{llccccc}
\toprule
Method & Statistic & $N=20$ & $N=40$ & $N=60$ & $N=80$ & $N=100$ \\
\midrule
MNL-MLE* & exact sol.
& {100.00} \std{0.00}
& {99.99} \std{0.02}
& {100.00} \std{0.00}
& {100.00} \std{0.00}
& {99.99} \std{0.01} \\
\midrule
\multirow{4}{*}{Unguided}
& max
& 90.45 \std{5.54}
& 80.89 \std{7.65}
& 76.40 \std{4.96}
& 71.35 \std{5.95}
& 68.80 \std{3.99} \\
& mean
& 63.72 \std{6.73}
& 60.43 \std{7.78}
& 59.79 \std{4.27}
& 57.17 \std{5.79}
& 55.47 \std{2.93} \\
& \texttt{q50}
& 64.28 \std{7.18}
& 60.26 \std{7.95}
& 60.30 \std{4.54}
& 57.33 \std{5.91}
& 55.42 \std{2.95} \\
& \texttt{q90}
& 75.57 \std{6.07}
& 71.88 \std{7.71}
& 68.31 \std{4.92}
& 63.84 \std{5.55}
& 61.39 \std{2.82} \\
\midrule
\multirow{4}{*}{GAsN-guided}
& max
& \textbf{99.93} \std{0.12}
& \textbf{99.93} \std{0.14}
& 99.74 \std{0.04}
& 99.86 \std{0.16}
& 99.89 \std{0.18} \\
& mean
& \textbf{99.72} \std{0.39}
& \textbf{99.61} \std{0.51}
& {99.26} \std{0.42}
& 98.93 \std{0.61}
& {98.77} \std{0.67} \\
& \texttt{q50}
& \textbf{99.77} \std{0.33}
& \textbf{99.63} \std{0.53}
& {99.27} \std{0.46}
& 98.93 \std{0.65}
& {98.80} \std{0.73} \\
& \texttt{q90}
& \textbf{99.93} \std{0.12}
& \textbf{99.83} \std{0.34}
& {99.57} \std{0.32}
& 99.60 \std{0.55}
& 99.62 \std{0.33} \\
\midrule
\multirow{4}{*}{RAsN-guided}
& max
& 99.65 \std{0.81}
& 99.54 \std{0.68}
& \textbf{99.85} \std{0.28}
& \textbf{99.98} \std{0.03}
& \textbf{99.98} \std{0.03} \\
& mean
& 98.97 \std{1.05}
& 99.18 \std{0.75}
& \textbf{99.39} \std{0.37}
& \textbf{99.60} \std{0.16}
& \textbf{99.54} \std{0.25} \\
& \texttt{q50}
& 99.05 \std{1.07}
& 99.14 \std{0.78}
& \textbf{99.42} \std{0.47}
& \textbf{99.67} \std{0.23}
& \textbf{99.56} \std{0.31} \\
& \texttt{q90}
& 99.40 \std{0.69}
& 99.40 \std{0.69}
& \textbf{99.71} \std{0.43}
& \textbf{99.93} \std{0.07}
& \textbf{99.84} \std{0.15} \\
\bottomrule
\end{tabular}%
}
\end{subtable}
\end{table}

\begin{table}[htbp]
\centering
\caption{Optimal ratio statistics of generated assortments on MCCM data. All entries are reported in percentage units (\%) and shown as mean $\pm$ standard deviation over 10 runs. The ground-truth choice model is MCCM, and MCCM-EM is the correctly specified parametric baseline. For this baseline, we report the quality of the single solution returned by the method; for GAsN and RAsN-guided methods, we report summary statistics over generated 256 assortments. * marks the oracle parametric model (i.e., the model class matches the ground-truth choice model).}
\label{tab:mccm_opt_ratio}
\newcommand{\std}[1]{\ensuremath{\!{\scriptstyle \pm\, #1}}}
\begin{subtable}[h]{\textwidth}
\centering
\caption{$\beta=1.0$}
\resizebox{0.92\textwidth}{!}{%
\begin{tabular}{llccccc}
\toprule
Method & Statistic & $N=20$ & $N=40$ & $N=60$ & $N=80$ & $N=100$ \\
\midrule
MCCM-EM* & exact sol. 
& 99.34 \std{1.83} 
& 99.36 \std{0.71} 
& 99.50 \std{0.71} 
& 97.54 \std{6.74} 
& 99.89 \std{0.09} \\
\midrule
\multirow{4}{*}{Unguided}
& max  & 86.25 \std{4.90} & 78.53 \std{6.38} & 72.91 \std{4.10} & 69.58 \std{3.31} & 67.00 \std{2.15} \\
& mean & 62.97 \std{7.88} & 62.24 \std{6.41} & 60.32 \std{2.88} & 58.85 \std{2.95} & 56.88 \std{2.40} \\
& \texttt{q50}  & 63.22 \std{8.34} & 62.39 \std{6.43} & 60.24 \std{2.87} & 58.81 \std{2.89} & 56.94 \std{2.41} \\
& \texttt{q90}  & 74.57 \std{7.11} & 70.26 \std{6.20} & 66.48 \std{3.06} & 63.87 \std{2.73} & 61.41 \std{2.30} \\
\midrule
\multirow{4}{*}{GAsN-guided}
& max  
& {99.50} \std{1.49} 
& \textbf{99.92} \std{0.14} 
& \textbf{99.97} \std{0.04} 
& 99.82 \std{0.32} 
& {99.91} \std{0.11} \\
& mean 
& \textbf{99.42} \std{1.51} 
& \textbf{99.55} \std{0.46} 
& \textbf{99.67} \std{0.20} 
& {99.25} \std{0.48} 
& {99.29} \std{0.44} \\
& \texttt{q50}  
& \textbf{99.44} \std{1.48} 
& \textbf{99.54} \std{0.49} 
& \textbf{99.70} \std{0.22} 
& {99.27} \std{0.52} 
& {99.37} \std{0.43} \\
& \texttt{q90}  
& \textbf{99.48} \std{1.49} 
& \textbf{99.78} \std{0.32} 
& \textbf{99.88} \std{0.10} 
& {99.62} \std{0.41} 
& {99.63} \std{0.30} \\
\midrule
\multirow{4}{*}{RAsN-guided}
& max  
& \textbf{99.65} \std{0.46} 
& 99.60 \std{0.67} 
& 99.87 \std{0.22} 
& \textbf{99.99} \std{0.02} 
& \textbf{99.99} \std{0.02} \\
& mean 
& 98.95 \std{1.03} 
& 99.17 \std{0.67} 
& 99.45 \std{0.35} 
& \textbf{99.61} \std{0.18} 
& \textbf{99.59} \std{0.19} \\
& \texttt{q50}  
& 98.97 \std{1.28} 
& 99.23 \std{0.60} 
& 99.51 \std{0.36} 
& \textbf{99.68} \std{0.27} 
& \textbf{99.66} \std{0.18} \\
& \texttt{q90}  
& 99.35 \std{0.77} 
& 99.47 \std{0.66} 
& 99.71 \std{0.43} 
& \textbf{99.92} \std{0.10} 
& \textbf{99.88} \std{0.09} \\
\bottomrule
\end{tabular}%
}
\end{subtable}

\vspace{1em}

\begin{subtable}[h]{\textwidth}
\centering
\caption{$\beta=0.1$}
\resizebox{0.92\textwidth}{!}{%
\begin{tabular}{llccccc}
\toprule
Method & Statistic & $N=20$ & $N=40$ & $N=60$ & $N=80$ & $N=100$ \\
\midrule
MCCM-EM* & exact sol. 
& 98.84 \std{2.17} 
& 99.44 \std{0.69} 
& 99.50 \std{0.71} 
& 97.53 \std{6.74} 
& 99.84 \std{0.18} \\
\midrule
\multirow{4}{*}{Unguided}
& max  & 86.22 \std{5.42} & 78.12 \std{6.09} & 72.25 \std{3.87} & 69.71 \std{3.22} & 65.56 \std{2.40} \\
& mean & 63.20 \std{7.00} & 62.31 \std{6.51} & 59.83 \std{3.01} & 58.52 \std{2.95} & 56.47 \std{2.56} \\
& \texttt{q50}  & 63.56 \std{7.28} & 62.49 \std{6.61} & 59.77 \std{3.06} & 58.50 \std{2.97} & 56.49 \std{2.60} \\
& \texttt{q90}  & 74.59 \std{5.94} & 69.82 \std{6.31} & 65.90 \std{3.04} & 63.53 \std{2.90} & 60.73 \std{2.58} \\
\midrule
\multirow{4}{*}{GAsN-guided}
& max  
& {99.48} \std{1.49} 
& \textbf{99.93} \std{0.14} 
& \textbf{99.97} \std{0.04} 
& {99.86} \std{0.16} 
& {99.89} \std{0.18} \\
& mean 
& \textbf{99.41} \std{1.48} 
& \textbf{99.55} \std{0.51} 
& \textbf{99.64} \std{0.22} 
& {99.20} \std{0.48} 
& {99.25} \std{0.46} \\
& \texttt{q50}  
& \textbf{99.44} \std{1.48} 
& \textbf{99.60} \std{0.53} 
& \textbf{99.67} \std{0.26} 
& {99.23} \std{0.51} 
& {99.31} \std{0.47} \\
& \texttt{q90}  
& \textbf{99.48} \std{1.49} 
& \textbf{99.73} \std{0.34} 
& \textbf{99.87} \std{0.12} 
& {99.60} \std{0.37} 
& {99.62} \std{0.33} \\
\midrule
\multirow{4}{*}{RAsN-guided}
& max  
& \textbf{99.65} \std{0.46} 
& 99.54 \std{0.68} 
& 99.85 \std{0.28} 
& \textbf{99.98} \std{0.03} 
& \textbf{99.98} \std{0.03} \\
& mean 
& 98.97 \std{1.05} 
& 99.18 \std{0.75} 
& 99.39 \std{0.37} 
& \textbf{99.60} \std{0.16} 
& \textbf{99.54} \std{0.25} \\
& \texttt{q50}  
& 99.05 \std{1.07} 
& 99.14 \std{0.78} 
& 99.42 \std{0.47} 
& \textbf{99.67} \std{0.23} 
& \textbf{99.56} \std{0.31} \\
& \texttt{q90}  
& 99.40 \std{0.69} 
& 99.40 \std{0.69} 
& 99.71 \std{0.43} 
& \textbf{99.93} \std{0.07} 
& \textbf{99.84} \std{0.15} \\
\bottomrule
\end{tabular}%
}
\end{subtable}
\end{table}

\begin{table}[htbp]
\centering
\newcommand{\std}[1]{\ensuremath{\!{\scriptstyle \pm\, #1}}}
\caption{Optimal ratio statistics of generated assortments on MMNL data. All entries are reported in percentage units (\%) and shown as mean $\pm$ standard deviation over 10 runs. The ground-truth choice model is MMNL, while MNL-MLE and MCCM-EM are misspecified parametric baselines. For these baselines, we report the quality of the single solution returned by the method; for GAsN and RAsN-guided methods, we report summary statistics over generated 256 assortments.}
\label{tab:mmnl_opt_ratio}
\begin{subtable}[ht]{\textwidth}
\centering
\caption{$\beta=1.0$}
\resizebox{0.9\textwidth}{!}{%
\begin{tabular}{llccccc}
\toprule
Method & Statistic & $N=20$ & $N=40$ & $N=60$ & $N=80$ & $N=100$ \\
\midrule
MNL-MLE & exact sol.
& 83.64 \std{12.57}
& 54.10 \std{11.80}
& 63.22 \std{12.77}
& 81.92 \std{14.78}
& 91.09 \std{5.44} \\
\midrule
MCCM-EM & exact sol.
& 91.94 \std{9.64}
& 58.91 \std{16.22}
& 55.65 \std{11.30}
& 49.32 \std{11.86}
& 59.03 \std{17.56} \\
\midrule
\multirow{4}{*}{Unguided}
& max  & 86.52 \std{4.06} & 75.54 \std{4.15} & 73.52 \std{3.03} & 66.47 \std{3.56} & 65.37 \std{3.16} \\
& mean & 60.72 \std{6.15} & 55.66 \std{5.66} & 56.32 \std{3.31} & 53.21 \std{2.53} & 54.37 \std{3.20} \\
& \texttt{q50}  & 60.85 \std{6.62} & 55.50 \std{5.92} & 56.44 \std{3.41} & 53.31 \std{2.58} & 54.46 \std{3.26} \\
& \texttt{q90}  & 72.27 \std{6.47} & 64.92 \std{5.80} & 64.02 \std{3.57} & 59.24 \std{2.76} & 60.41 \std{3.44} \\
\midrule
\multirow{4}{*}{GAsN-guided}
& max
& \textbf{95.78} \std{5.45}
& \textbf{98.58} \std{1.43}
& \textbf{98.77} \std{0.61}
& 97.23 \std{1.45}
& 97.39 \std{0.66} \\
& mean
& \textbf{92.27} \std{8.10}
& \textbf{97.26} \std{1.76}
& \textbf{97.85} \std{0.81}
& 92.96 \std{8.51}
& 93.31 \std{5.99} \\
& \texttt{q50}
& \textbf{92.08} \std{8.26}
& \textbf{97.27} \std{1.70}
& \textbf{97.76} \std{0.94}
& 93.09 \std{8.59}
& 93.64 \std{6.04} \\
& \texttt{q90}
& \textbf{92.30} \std{8.44}
& \textbf{97.89} \std{1.61}
& \textbf{98.31} \std{0.70}
& 93.64 \std{8.58}
& 96.43 \std{1.26} \\
\midrule
\multirow{4}{*}{RAsN-guided}
& max
& 92.54 \std{9.98}
& 94.71 \std{6.17}
& 98.72 \std{1.14}
& \textbf{98.33} \std{1.04}
& \textbf{98.31} \std{0.79} \\
& mean
& 89.99 \std{9.99}
& 87.48 \std{9.70}
& 96.74 \std{2.58}
& \textbf{97.26} \std{1.53}
& \textbf{96.36} \std{2.08} \\
& \texttt{q50}
& 90.23 \std{9.49}
& 86.98 \std{9.60}
& 97.10 \std{1.62}
& \textbf{97.44} \std{1.65}
& \textbf{96.33} \std{2.17} \\
& \texttt{q90}
& 91.18 \std{10.02}
& 89.43 \std{10.82}
& 98.02 \std{1.75}
& \textbf{97.94} \std{1.44}
& \textbf{97.50} \std{1.48} \\
\bottomrule
\end{tabular}%
}
\end{subtable}

\vspace{1em}

\begin{subtable}[ht]{\textwidth}
\centering
\caption{$\beta=0.1$}
\resizebox{0.9\textwidth}{!}{%
\begin{tabular}{llccccc}
\toprule
Method & Statistic & $N=20$ & $N=40$ & $N=60$ & $N=80$ & $N=100$ \\
\midrule
MNL-MLE & exact sol.
& 84.10 \std{12.77}
& 56.15 \std{13.67}
& 65.72 \std{14.42}
& 79.80 \std{17.58}
& 89.62 \std{8.48} \\
\midrule
MCCM-EM & exact sol.
& 91.66 \std{8.46}
& 56.24 \std{16.40}
& 55.65 \std{11.30}
& 49.32 \std{11.86}
& 59.03 \std{17.56} \\
\midrule
\multirow{4}{*}{Unguided}
& max  & 89.25 \std{4.62} & 75.62 \std{4.33} & 72.15 \std{3.25} & 66.68 \std{3.00} & 65.77 \std{3.11} \\
& mean & 61.75 \std{5.56} & 56.13 \std{5.58} & 56.78 \std{2.98} & 53.52 \std{2.63} & 53.17 \std{3.16} \\
& \texttt{q50}  & 61.92 \std{5.89} & 56.02 \std{5.79} & 56.89 \std{3.08} & 53.58 \std{2.67} & 53.26 \std{3.19} \\
& \texttt{q90}  & 73.79 \std{6.04} & 65.63 \std{5.94} & 63.73 \std{3.36} & 59.67 \std{2.93} & 59.02 \std{3.23} \\
\midrule
\multirow{4}{*}{GAsN-guided}
& max
& \textbf{95.78} \std{5.45}
& \textbf{98.81} \std{1.34}
& 98.82 \std{0.56}
& 97.52 \std{1.35}
& 97.52 \std{0.69} \\
& mean
& \textbf{93.91} \std{5.67}
& \textbf{97.42} \std{1.52}
& 95.38 \std{7.80}
& 93.10 \std{7.97}
& 95.75 \std{1.59} \\
& \texttt{q50}
& \textbf{93.48} \std{5.40}
& \textbf{97.32} \std{1.78}
& 95.25 \std{8.25}
& 93.17 \std{8.33}
& 95.73 \std{1.65} \\
& \texttt{q90}
& \textbf{95.78} \std{5.45}
& \textbf{98.03} \std{1.45}
& 96.13 \std{6.85}
& 93.76 \std{8.20}
& 96.43 \std{1.32} \\
\midrule
\multirow{4}{*}{RAsN-guided}
& max
& 93.32 \std{10.20}
& 94.18 \std{7.44}
& \textbf{98.77} \std{1.15}
& \textbf{98.36} \std{1.02}
& \textbf{98.25} \std{0.83} \\
& mean
& 90.52 \std{9.35}
& 88.24 \std{9.64}
& \textbf{97.35} \std{1.14}
& \textbf{97.20} \std{1.52}
& \textbf{96.37} \std{1.95} \\
& \texttt{q50}
& 90.34 \std{9.58}
& 87.86 \std{9.64}
& \textbf{97.19} \std{1.48}
& \textbf{97.28} \std{1.64}
& \textbf{96.42} \std{2.05} \\
& \texttt{q90}
& 92.54 \std{9.98}
& 91.05 \std{11.03}
& \textbf{98.09} \std{1.46}
& \textbf{97.93} \std{1.46}
& \textbf{97.43} \std{1.38} \\
\bottomrule
\end{tabular}%
}
\end{subtable}
\end{table}

\begin{landscape}
\begin{table}[htbp]
\centering
\newcommand{\std}[1]{\ensuremath{\!{\scriptstyle \pm\, #1}}}
\caption{Optimal ratio statistics on MNL data across varying offline sample sizes. All entries are reported in percentage units (\%) and shown as mean $\pm$ standard deviation over 10 runs. For MNL-MLE, we report the quality of the single returned solution; for GAsN-guided and RAsN-guided methods, we report summary statistics over 256 generated assortments.}
\label{tab:sample_efficiency_mnl}
\resizebox{1.38\textwidth}{!}{%
\begin{tabular}{l l l ccccccccc}
\toprule
$N$ & Method & Statistic & $n=100$ & $n=200$ & $n=500$ & $n=1000$ & $n=2000$ & $n=5000$ & $n=10000$ & $n=20000$ & $n=50000$ \\
\midrule
\multirow{9}{*}{$40$} & MNL-MLE & exact sol. & 94.60 \std{8.87} & 99.55 \std{0.79} & 99.96 \std{0.08} & 99.96 \std{0.08} & 99.93 \std{0.10} & 100.00 \std{0.01} & 100.00 \std{0.00} & 100.00 \std{0.00} & 100.00 \std{0.01} \\
\cmidrule(lr){2-12}
 & \multirow{4}{*}{GAsN} & max & 99.98 \std{0.05} & 99.99 \std{0.04} & 100.00 \std{0.01} & 98.18 \std{5.40} & 99.98 \std{0.05} & 99.92 \std{0.15} & 99.96 \std{0.10} & 99.97 \std{0.09} & 99.85 \std{0.37} \\
 &  & mean & 97.31 \std{7.16} & 99.72 \std{0.26} & 99.75 \std{0.19} & 97.49 \std{6.81} & 97.75 \std{6.04} & 98.54 \std{3.39} & 99.66 \std{0.30} & 99.70 \std{0.39} & 99.01 \std{1.82} \\
 &  & \texttt{q50} & 97.24 \std{7.56} & 99.75 \std{0.35} & 99.80 \std{0.21} & 97.50 \std{6.88} & 97.52 \std{6.91} & 98.29 \std{4.27} & 99.69 \std{0.29} & 99.71 \std{0.41} & 98.93 \std{2.18} \\
 &  & \texttt{q90} & 97.60 \std{6.94} & 99.93 \std{0.11} & 99.93 \std{0.12} & 97.84 \std{6.31} & 99.88 \std{0.15} & 99.81 \std{0.29} & 99.90 \std{0.17} & 99.87 \std{0.26} & 99.55 \std{0.96} \\
\cmidrule(lr){2-12}
 & \multirow{4}{*}{RAsN} & max & 99.29 \std{0.85} & 99.29 \std{0.83} & 99.72 \std{0.62} & 99.65 \std{0.60} & 99.72 \std{0.62} & 99.72 \std{0.62} & 99.74 \std{0.62} & 99.67 \std{0.56} & 99.87 \std{0.32} \\
 &  & mean & 96.55 \std{2.91} & 96.63 \std{2.82} & 97.30 \std{2.01} & 97.90 \std{2.16} & 98.18 \std{1.75} & 99.11 \std{0.94} & 99.36 \std{0.75} & 99.12 \std{0.96} & 99.67 \std{0.53} \\
 &  & \texttt{q50} & 96.91 \std{2.51} & 96.73 \std{2.75} & 97.00 \std{2.35} & 98.41 \std{1.57} & 98.28 \std{1.97} & 99.10 \std{1.00} & 99.46 \std{0.71} & 99.31 \std{0.81} & 99.72 \std{0.61} \\
 &  & \texttt{q90} & 98.13 \std{2.07} & 97.69 \std{2.28} & 98.55 \std{1.62} & 98.64 \std{1.66} & 98.66 \std{1.67} & 99.45 \std{0.93} & 99.53 \std{0.74} & 99.56 \std{0.73} & 99.84 \std{0.35} \\
\midrule
\multirow{9}{*}{$80$} & MNL-MLE & exact sol. & 94.91 \std{9.07} & 99.52 \std{0.68} & 99.90 \std{0.14} & 99.96 \std{0.11} & 99.97 \std{0.08} & 100.00 \std{0.00} & 100.00 \std{0.00} & 100.00 \std{0.00} & 100.00 \std{0.00} \\
\cmidrule(lr){2-12}
 & \multirow{4}{*}{GAsN} & max & 99.97 \std{0.04} & 99.94 \std{0.11} & 97.61 \std{6.98} & 99.90 \std{0.14} & 99.87 \std{0.17} & 99.69 \std{0.30} & 99.64 \std{0.35} & 99.67 \std{0.32} & 99.84 \std{0.24} \\
 &  & mean & 99.49 \std{0.30} & 96.14 \std{10.10} & 96.71 \std{8.43} & 98.33 \std{3.56} & 99.41 \std{0.44} & 99.06 \std{0.64} & 98.94 \std{0.68} & 98.75 \std{0.83} & 98.89 \std{0.92} \\
 &  & \texttt{q50} & 99.56 \std{0.33} & 96.10 \std{10.42} & 96.78 \std{8.46} & 99.45 \std{0.48} & 99.48 \std{0.46} & 99.08 \std{0.72} & 99.00 \std{0.77} & 98.81 \std{0.79} & 99.05 \std{0.71} \\
 &  & \texttt{q90} & 99.82 \std{0.22} & 96.54 \std{9.88} & 97.22 \std{7.86} & 99.77 \std{0.29} & 99.71 \std{0.29} & 99.36 \std{0.51} & 99.34 \std{0.55} & 99.32 \std{0.48} & 99.59 \std{0.37} \\
\cmidrule(lr){2-12}
 & \multirow{4}{*}{RAsN} & max & 99.89 \std{0.17} & 99.86 \std{0.17} & 99.86 \std{0.17} & 99.81 \std{0.21} & 99.87 \std{0.17} & 99.92 \std{0.13} & 99.88 \std{0.20} & 99.90 \std{0.16} & 99.97 \std{0.07} \\
 &  & mean & 99.20 \std{0.76} & 99.18 \std{0.78} & 96.98 \std{6.60} & 97.22 \std{6.19} & 99.39 \std{0.64} & 99.53 \std{0.38} & 99.43 \std{0.34} & 99.32 \std{0.55} & 99.61 \std{0.27} \\
 &  & \texttt{q50} & 99.35 \std{0.82} & 99.34 \std{0.82} & 97.02 \std{6.80} & 97.22 \std{6.26} & 99.34 \std{0.82} & 99.58 \std{0.44} & 99.47 \std{0.33} & 99.40 \std{0.47} & 99.68 \std{0.29} \\
 &  & \texttt{q90} & 99.62 \std{0.53} & 99.63 \std{0.53} & 97.53 \std{6.19} & 97.70 \std{5.71} & 99.63 \std{0.53} & 99.80 \std{0.16} & 99.74 \std{0.27} & 99.70 \std{0.31} & 99.80 \std{0.29} \\
\bottomrule
\end{tabular}%
}
\end{table}
\end{landscape}

\begin{landscape}
\begin{table}[htbp]
\centering
\newcommand{\std}[1]{\ensuremath{\!{\scriptstyle \pm\, #1}}}
\caption{Optimal ratio statistics on MMNL data across varying offline sample sizes. All entries are reported in percentage units (\%) and shown as mean $\pm$ standard deviation over 10 runs. For MNL-MLE, we report the quality of the single returned solution; for GAsN-guided and RAsN-guided methods, we report summary statistics over 256 generated assortments.}
\label{tab:sample_efficiency_mmnl}
\resizebox{1.39\textwidth}{!}{%
\begin{tabular}{l l l ccccccccc}
\toprule
$N$ & Method & Statistic & $n=100$ & $n=200$ & $n=500$ & $n=1000$ & $n=2000$ & $n=5000$ & $n=10000$ & $n=20000$ & $n=50000$ \\
\midrule
\multirow{9}{*}{$40$} & MNL-MLE & exact sol. & 57.07 \std{12.12} & 54.30 \std{13.13} & 56.04 \std{15.97} & 58.20 \std{16.83} & 58.20 \std{16.83} & 58.20 \std{16.83} & 54.10 \std{11.80} & 56.56 \std{14.67} & 56.56 \std{14.67} \\
\cmidrule(lr){2-12}
 & \multirow{4}{*}{GAsN} & max & 99.12 \std{0.52} & 99.19 \std{0.53} & 99.26 \std{0.55} & 99.26 \std{0.55} & 99.13 \std{0.92} & 99.12 \std{0.91} & 99.13 \std{1.09} & 95.74 \std{4.41} & 83.28 \std{11.67} \\
 &  & mean & 97.84 \std{1.41} & 97.88 \std{1.42} & 97.82 \std{1.47} & 97.93 \std{1.22} & 95.24 \std{7.65} & 97.51 \std{2.25} & 97.77 \std{1.58} & 88.00 \std{11.42} & 73.81 \std{16.65} \\
 &  & \texttt{q50} & 97.76 \std{1.58} & 97.94 \std{1.70} & 97.74 \std{1.68} & 97.81 \std{1.27} & 95.61 \std{6.93} & 98.14 \std{1.73} & 98.29 \std{1.32} & 88.04 \std{11.90} & 73.62 \std{18.33} \\
 &  & \texttt{q90} & 98.52 \std{1.16} & 98.36 \std{1.32} & 98.43 \std{1.20} & 98.58 \std{1.25} & 96.16 \std{6.88} & 98.78 \std{1.29} & 98.69 \std{1.18} & 88.87 \std{11.85} & 77.63 \std{15.16} \\
\cmidrule(lr){2-12}
 & \multirow{4}{*}{RAsN} & max & 99.28 \std{1.03} & 99.02 \std{0.99} & 99.06 \std{1.02} & 99.06 \std{1.02} & 99.06 \std{1.02} & 96.41 \std{5.51} & 96.09 \std{7.10} & 90.61 \std{10.79} & 82.75 \std{18.14} \\
 &  & mean & 97.23 \std{1.28} & 96.50 \std{2.55} & 95.92 \std{3.16} & 96.65 \std{2.13} & 94.15 \std{6.21} & 93.63 \std{5.68} & 92.63 \std{7.31} & 87.37 \std{9.40} & 78.15 \std{16.66} \\
 &  & \texttt{q50} & 97.56 \std{1.58} & 96.77 \std{2.29} & 95.91 \std{3.67} & 96.88 \std{2.25} & 94.43 \std{6.09} & 94.49 \std{6.36} & 93.18 \std{6.93} & 87.48 \std{9.69} & 77.46 \std{16.91} \\
 &  & \texttt{q90} & 98.71 \std{1.14} & 98.69 \std{1.17} & 98.45 \std{1.57} & 98.69 \std{1.17} & 98.10 \std{1.57} & 96.31 \std{5.46} & 94.65 \std{7.35} & 88.93 \std{10.51} & 80.56 \std{17.02} \\
\midrule
\multirow{9}{*}{$80$} & MNL-MLE & exact sol. & 58.80 \std{12.01} & 56.75 \std{16.20} & 60.94 \std{16.17} & 77.09 \std{13.20} & 77.72 \std{14.88} & 77.76 \std{17.25} & 76.81 \std{17.88} & 81.17 \std{11.74} & 78.14 \std{13.68} \\
\cmidrule(lr){2-12}
 & \multirow{4}{*}{GAsN} & max & 98.35 \std{0.92} & 98.35 \std{0.98} & 98.24 \std{1.06} & 98.35 \std{0.91} & 97.96 \std{1.19} & 97.39 \std{1.35} & 97.66 \std{1.43} & 97.78 \std{1.42} & 98.80 \std{1.19} \\
 &  & mean & 96.53 \std{1.42} & 96.48 \std{1.45} & 96.57 \std{1.46} & 96.47 \std{1.49} & 96.50 \std{1.55} & 95.97 \std{1.49} & 96.07 \std{1.69} & 95.71 \std{2.05} & 97.21 \std{1.39} \\
 &  & \texttt{q50} & 96.45 \std{1.56} & 96.40 \std{1.63} & 96.51 \std{1.56} & 96.46 \std{1.59} & 96.50 \std{1.69} & 95.93 \std{1.52} & 96.18 \std{1.81} & 95.88 \std{1.77} & 97.15 \std{1.87} \\
 &  & \texttt{q90} & 97.34 \std{1.33} & 97.33 \std{1.28} & 97.37 \std{1.29} & 97.34 \std{1.46} & 97.05 \std{1.47} & 96.68 \std{1.53} & 96.75 \std{1.85} & 96.97 \std{1.75} & 98.37 \std{1.09} \\
\cmidrule(lr){2-12}
 & \multirow{4}{*}{RAsN} & max & 99.08 \std{1.20} & 98.76 \std{1.49} & 99.08 \std{1.20} & 98.79 \std{1.44} & 98.85 \std{1.11} & 98.35 \std{1.10} & 98.74 \std{0.78} & 98.92 \std{1.17} & 97.26 \std{2.78} \\
 &  & mean & 96.93 \std{1.75} & 96.86 \std{1.73} & 97.07 \std{1.57} & 96.94 \std{1.72} & 97.19 \std{1.47} & 97.17 \std{1.48} & 97.21 \std{1.24} & 97.51 \std{1.24} & 95.43 \std{2.91} \\
 &  & \texttt{q50} & 97.38 \std{1.30} & 97.29 \std{1.57} & 97.42 \std{1.29} & 97.22 \std{1.68} & 97.38 \std{1.37} & 97.03 \std{1.70} & 97.03 \std{1.44} & 97.62 \std{1.45} & 95.60 \std{2.95} \\
 &  & \texttt{q90} & 98.08 \std{1.50} & 98.08 \std{1.49} & 98.07 \std{1.47} & 98.03 \std{1.48} & 97.99 \std{1.46} & 97.97 \std{1.35} & 98.14 \std{1.11} & 98.20 \std{1.04} & 96.45 \std{3.37} \\
\bottomrule
\end{tabular}%
}
\end{table}
\end{landscape}

\section{Theoretical Proofs}
\subsection{Proof of Theorem \ref{thm:entreg}}\label{sec:entreg}
\begin{proof}[Proof of Theorem \ref{thm:entreg}]
We write the objective function as
\begin{equation*}
\Phi(q)
:=
\sum_{s \in \mathscr{S}} q(s) R(s)
+\frac{1}{\beta}\mathcal{H}(q)
=
\sum_{s \in \mathscr{S}} q(s) R(s)
-\frac{1}{\beta}\sum_{s \in \mathscr{S}} q(s)\log q(s),
\end{equation*}
where \(q \in \Delta(\mathscr{S})\), i.e.,
\begin{equation*}
q(s)\ge 0 \quad \text{for all } s \in \mathscr{S},
\qquad
\sum_{s \in \mathscr{S}} q(s)=1.
\end{equation*}
We wish to solve
\begin{equation*}
\max_{q \in \Delta(\mathscr{S})} \Phi(q).
\end{equation*}
We first note that the objective is \textit{strictly concave} in \(q\), because the term
\(\sum_s q(s)R(s)\) is linear in \(q\), while the entropy term
\(-\sum_s q(s)\log q(s)\) is strictly concave on the probability simplex.
Therefore, \(\Phi(q)\) is strictly concave, and hence it admits at most one
maximizer. It thus suffices to characterize the stationary point. To enforce the normalization constraint \(\sum_s q(s)=1\), consider the
Lagrangian
\begin{equation*}
\mathcal{L}(q,\lambda)
=
\sum_{s \in \mathscr{S}} q(s)R(s)
-\frac{1}{\beta}\sum_{s \in \mathscr{S}} q(s)\log q(s)
+\lambda\left(\sum_{s \in \mathscr{S}} q(s)-1\right).
\end{equation*}
For each \(s \in \mathscr{S}\), differentiate \(\mathcal{L}\) with respect to \(q(s)\). Using $\frac{d}{dx}(x\log x)=1+\log x,$
we obtain
\begin{equation*}
\frac{\partial \mathcal{L}}{\partial q(s)}
=
R(s)-\frac{1}{\beta}(1+\log q(s))+\lambda.
\end{equation*}
At an optimum, the first-order condition gives
\begin{equation*}
R(s)-\frac{1}{\beta}(1+\log q(s))+\lambda = 0.
\end{equation*}
Rearranging yields
\begin{equation*}
\log q(s)
=
\beta R(s)+\beta\lambda-1.
\end{equation*}
Hence there exists a constant \(C>0\), independent of \(s\), such that
\begin{equation*}
\log q(s)=\beta R(s)+C.
\end{equation*}
Exponentiating both sides gives
\begin{equation*}
q(s)=e^{C} e^{\beta R(s)}.
\end{equation*}
Now use the constraint that \(q\) is a probability distribution:
\begin{equation*}
1
=
\sum_{s \in \mathscr{S}} q(s)
=
e^{C}\sum_{s \in \mathscr{S}} e^{\beta R(s)}.
\end{equation*}
Therefore,
\begin{equation*}
e^{C}
=
\left(\sum_{s' \in \mathscr{S}} e^{\beta R(s')}\right)^{-1}.
\end{equation*}
Substituting this back into the expression for \(q(s)\), we obtain
\begin{equation*}
q^\star(s)
=
\frac{e^{\beta R(s)}}{\sum_{s' \in \mathscr{S}} e^{\beta R(s')}},\quad s\in\mathscr{S}.
\end{equation*}
Finally, since \(\Phi\) is strictly concave over \(\Delta(\mathscr{S})\), this
stationary point $q^\star$ is the unique maximizer. This completes the proof.
\end{proof}
\subsection{Proof of Proposition \ref{prop:forward_mixing}}\label{sec:pf_fwmix}
\begin{proof}[Proof of Proposition \ref{prop:forward_mixing}]
We first analyze one coordinate. Fix $i\in[N]$ and define
\[
m_{t,i}
:=
\mathbb{P}(s_{t,i}=1\mid s_0).
\]
By the definition of the corruption kernel, conditional on $s_{t-1,i}$, the next bit is kept with probability $1-\beta_t$ and is resampled uniformly from $\{0,1\}$ with probability $\beta_t$. Therefore,
\[
m_{t,i}
=
(1-\beta_t)m_{t-1,i}
+
\frac{\beta_t}{2}.
\]
Subtracting $1/2$ from both sides gives
\[
m_{t,i}-\frac{1}{2}
=
(1-\beta_t)
\left(
m_{t-1,i}-\frac{1}{2}
\right).
\]
Iterating this recursion yields
\[
m_{t,i}-\frac{1}{2}
=
\left(\prod_{\tau=1}^t(1-\beta_\tau)\right)
\left(s_{0,i}-\frac{1}{2}\right)
=
\bar\alpha_t
\left(s_{0,i}-\frac{1}{2}\right).
\]
Hence
\[
\mathbb{P}(s_{t,i}=1\mid s_0)
=
\frac{1}{2}
+
\bar\alpha_t
\left(s_{0,i}-\frac{1}{2}\right).
\]
Since the forward corruption acts independently across coordinates, conditional on $s_0$ the coordinates of $s_t$ are independent. Thus,
\[
q_t(s_t\mid s_0)
=
\prod_{i=1}^N
\mathbb{P}(s_{t,i}\mid s_0).
\]
Using the expression above, each coordinate marginal can be written as
\[
\mathbb{P}(s_{t,i}\mid s_0)
=
\frac{1}{2}
+
\bar\alpha_t
\left(s_{0,i}-\frac{1}{2}\right)(2s_{t,i}-1),
\]
which proves the factorized expression.

Finally, let $q_{t,i}$ denote the marginal law of $s_{t,i}$ given $s_0$, and let $u_i$ denote the uniform distribution on $\{0,1\}$. Then
\[
\|q_{t,i}-u_i\|_{\mathrm{TV}}
=
\left|
\mathbb{P}(s_{t,i}=1\mid s_0)-\frac{1}{2}
\right|
=
\bar\alpha_t
\left|s_{0,i}-\frac{1}{2}\right|
=
\frac{\bar\alpha_t}{2}.
\]
Using the tensorization bound
\[
\left\|
\bigotimes_{i=1}^N q_{t,i}
-
\bigotimes_{i=1}^N u_i
\right\|_{\mathrm{TV}}
\leq
\sum_{i=1}^N
\|q_{t,i}-u_i\|_{\mathrm{TV}},
\]
we obtain
\[
\left\|
q_t(\cdot\mid s_0)-u
\right\|_{\mathrm{TV}}
\leq
\sum_{i=1}^N
\frac{\bar\alpha_t}{2}
=
\frac{N}{2}\bar\alpha_t.
\]
This completes the proof.
\end{proof}

\subsection{Proof of Proposition \ref{prop:population_denoising}}\label{sec:pf_population_denoising}
\begin{proof}
The loss decomposes over coordinates. Fix a coordinate $i$ and condition on a particular value $(s_t,t)=(z,\tau)$.
Let
\[
\eta_i(z,\tau)
:=
\mathbb{P}(s_{0,i}=1\mid s_t=z,t=\tau).
\]
For a predicted probability $a\in(0,1)$, the conditional binary cross-entropy risk is
\[
r(a;z,\tau)
=
-\eta_i(z,\tau)\log a
-
(1-\eta_i(z,\tau))\log(1-a).
\]
This is minimized over $a\in(0,1)$ at $a=\eta_i(z,\tau)$. Indeed,
\[
\frac{\partial r}{\partial a}
=
-\frac{\eta_i(z,\tau)}{a}
+
\frac{1-\eta_i(z,\tau)}{1-a},
\]
and setting this derivative equal to zero gives
\[
a=\eta_i(z,\tau).
\]
Moreover, the second derivative is
\[
\frac{\partial^2 r}{\partial a^2}
=
\frac{\eta_i(z,\tau)}{a^2}
+
\frac{1-\eta_i(z,\tau)}{(1-a)^2}
>0,
\]
so this minimizer is unique whenever $\eta_i(z,\tau)\in(0,1)$.

Since $a=\sigma(g_i(z,\tau))$, the population minimizer satisfies
\[
\sigma(g_i^\star(z,\tau))
=
\eta_i(z,\tau)
=
\mathbb{P}(s_{0,i}=1\mid s_t=z,t=\tau).
\]
Taking the logit transform gives
\[
g_i^\star(z,\tau)
=
\log
\frac{\eta_i(z,\tau)}{1-\eta_i(z,\tau)}
=
\log
\frac{
\mathbb{P}(s_{0,i}=1\mid s_t=z,t=\tau)
}{
\mathbb{P}(s_{0,i}=0\mid s_t=z,t=\tau)
},
\]
whenever the posterior marginal lies strictly between $0$ and $1$.
\end{proof}

\subsection{Proof of Theorem \ref{thm:kl_guidance}}\label{sec:pf_klguide}
\begin{proof}[Proof of Theorem \ref{thm:kl_guidance}]
Fix $s_t$ and $t$. For notational simplicity, we write
\[
p_\phi(x) := p_\phi(x\mid s_t,t),
\qquad
\ell(x) := \ell_{s_t}(x).
\]
The local improvement objective is
\[
\mathcal{J}(q)
=
\mathbb{E}_{x\sim q}[\ell(x)]
-
\frac{1}{\lambda_t}
\mathrm{KL}(q\,\|\,p_\phi).
\]
Expanding the KL divergence gives
\[
\mathcal{J}(q)
=
\sum_{x}q(x)\ell(x)
-
\frac{1}{\lambda_t}
\sum_x q(x)\log\frac{q(x)}{p_\phi(x)}.
\]
Rearranging terms, we obtain
\[
\mathcal{J}(q)
=
-\frac{1}{\lambda_t}
\sum_x q(x)
\log
\frac{
q(x)
}{
p_\phi(x)\exp(\lambda_t \ell(x))
}.
\]
Define the normalizing constant
\[
Z
:=
\sum_{x\in\{0,1\}^N}
p_\phi(x)\exp(\lambda_t \ell(x)).
\]
Then
\[
\bar q(x)
:=
\frac{
p_\phi(x)\exp(\lambda_t \ell(x))
}{
Z
}
\]
is a probability distribution over $\{0,1\}^N$. Using this definition, we can rewrite the objective as
\[
\mathcal{J}(q)
=
\frac{1}{\lambda_t}\log Z
-
\frac{1}{\lambda_t}
\mathrm{KL}(q\,\|\,\bar q).
\]
Since $\mathrm{KL}(q\,\|\,\bar q)\geq 0$, with equality if and only if $q=\bar q$, the unique maximizer is
\[
q_t^\star(x\mid s_t)
=
\bar q(x)
=
\frac{
p_\phi(x\mid s_t,t)
\exp(\lambda_t \ell_{s_t}(x))
}{
Z_t(s_t)
}.
\]
This proves the first claim.

It remains to show that this optimizer factorizes and has the claimed coordinate-wise logits. By assumption,
\[
p_\phi(x\mid s_t,t)
=
\prod_{i=1}^N
\sigma(g_\phi^{(i)}(s_t,t))^{x_i}
\left(1-\sigma(g_\phi^{(i)}(s_t,t))\right)^{1-x_i}.
\]
Also, since
\[
\ell_{s_t}(x)
=
\sum_{i=1}^N x_i\Delta_i R(s_t),
\]
we have
\[
\exp(\lambda_t \ell_{s_t}(x))
=
\prod_{i=1}^N
\exp\left(\lambda_t x_i\Delta_i R(s_t)\right).
\]
Therefore,
\[
q_t^\star(x\mid s_t)
\propto
\prod_{i=1}^N
\left[
\sigma(g_\phi^{(i)}(s_t,t))^{x_i}
\left(1-\sigma(g_\phi^{(i)}(s_t,t))\right)^{1-x_i}
\exp\left(\lambda_t x_i\Delta_i R(s_t)\right)
\right].
\]
Hence $q_t^\star(\cdot\mid s_t)$ factorizes across coordinates. For each coordinate $i$, the odds ratio under $q_t^\star$ is
\[
\frac{
q_t^\star(x_i=1\mid s_t)
}{
q_t^\star(x_i=0\mid s_t)
}
=
\frac{
\sigma(g_\phi^{(i)}(s_t,t))
}{
1-\sigma(g_\phi^{(i)}(s_t,t))
}
\exp\left(\lambda_t\Delta_i R(s_t)\right).
\]
Taking logarithms gives
\[
\log
\frac{
q_t^\star(x_i=1\mid s_t)
}{
q_t^\star(x_i=0\mid s_t)
}
=
\log
\frac{
\sigma(g_\phi^{(i)}(s_t,t))
}{
1-\sigma(g_\phi^{(i)}(s_t,t))
}
+
\lambda_t\Delta_i R(s_t).
\]
Since $\log\frac{\sigma(a)}{1-\sigma(a)}=a$, we obtain
\[
\log
\frac{
q_t^\star(x_i=1\mid s_t)
}{
q_t^\star(x_i=0\mid s_t)
}
=
g_\phi^{(i)}(s_t,t)
+
\lambda_t\Delta_i R(s_t).
\]
Equivalently,
\[
q_t^\star(x_i=1\mid s_t)
=
\sigma\left(
g_\phi^{(i)}(s_t,t)
+
\lambda_t\Delta_i R(s_t)
\right).
\]
This is exactly the reward-guided reverse transition.
\end{proof}

\end{document}